\documentclass{article}
\usepackage{fullpage,appendix}
\usepackage{subcaption}
\usepackage{cite}

\usepackage{microtype}
\usepackage{graphicx}
\usepackage{booktabs} 
\usepackage{multirow}
\usepackage[table,xcdraw]{xcolor}
\usepackage{makecell,enumitem}

\usepackage{hyperref}

\usepackage{amsmath}
\usepackage{amssymb}
\usepackage{mathtools}
\usepackage{amsthm}
\usepackage{algpseudocode}
\usepackage{algorithm}
\usepackage{graphicx}
\usepackage{booktabs}
\usepackage{multirow}
\usepackage{float}

\usepackage[capitalize,noabbrev]{cleveref}

\newtheorem{theorem}{Theorem}
\newtheorem{lemma}{Lemma}
\newtheorem{corollary}{Corollary}

\allowdisplaybreaks

\usepackage[textsize=tiny]{todonotes}

\title{Certified Adversarial Robustness via Partition-based Randomized Smoothing}

\date{}

\author{
Hossein~Goli\thanks{Department of Computer Engineering, Sharif University of Technology, \url{hossein.goli@sharif.edu}}~,
Farzan~Farnia\thanks{Department of Computer Science and Engineering, The Chinese University of Hong Kong, \url{farnia@cse.cuhk.edu.hk}}
	}     

\begin{document}
\maketitle

\begin{abstract}
A reliable application of deep neural network classifiers requires robustness certificates against adversarial perturbations. 
Gaussian smoothing is a widely analyzed approach to certifying robustness against norm-bounded perturbations, where the certified prediction radius depends on the variance of the Gaussian noise and the confidence level of the neural net's prediction under the additive Gaussian noise. However, in application to high-dimensional image datasets, the certified radius of the plain Gaussian smoothing could be relatively small, since Gaussian noise with high variances can significantly harm the visibility of an image. In this work, we propose the \emph{Pixel Partitioning-based Randomized Smoothing (PPRS)} methodology to boost the neural net's confidence score and thus the robustness radius of the certified prediction. We demonstrate that the proposed PPRS algorithm improves the visibility of the images under additive Gaussian noise. We discuss the numerical results of applying PPRS to standard computer vision datasets and neural network architectures. Our empirical findings indicate a considerable improvement in the certified accuracy and stability of the prediction model to the additive Gaussian noise in randomized smoothing.  
\end{abstract}

\section{Introduction}
\label{sec:intro}
While deep neural network (DNN) classifiers have attained state-of-the-art performance in benchmark image and sound recognition tasks, they are widely known to be vulnerable to minor perturbations to their input data, commonly regarded as \emph{adversarial attacks}. Since the introduction of adversarial perturbations in \cite{szegedy2013intriguing,biggio2013evasion,goodfellow2015explaining}, adversarial attack and defense methods have been extensively studied in the literature. In particular, the development of defense methods with certified robustness against norm-bounded perturbations has received significant attention over recent years.

\begin{figure}[ht]
\centering
\begin{tabular}{c}
\includegraphics[trim={2cm 11.55cm 2cm 11.55cm}, width=0.6\linewidth]{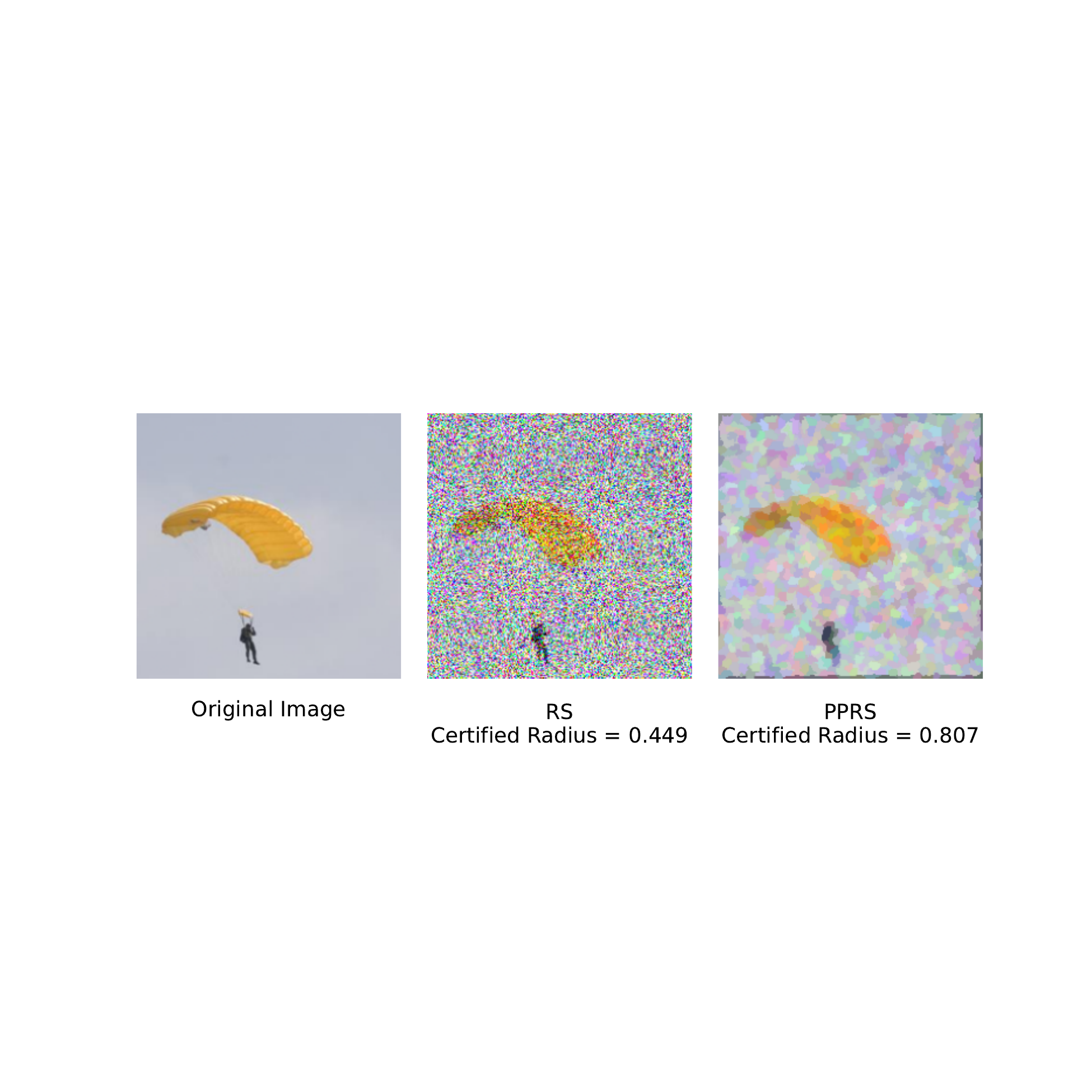}\\ [1ex]
\includegraphics[trim={2cm 11.55cm 2cm 11.55cm}, width=0.6\linewidth]{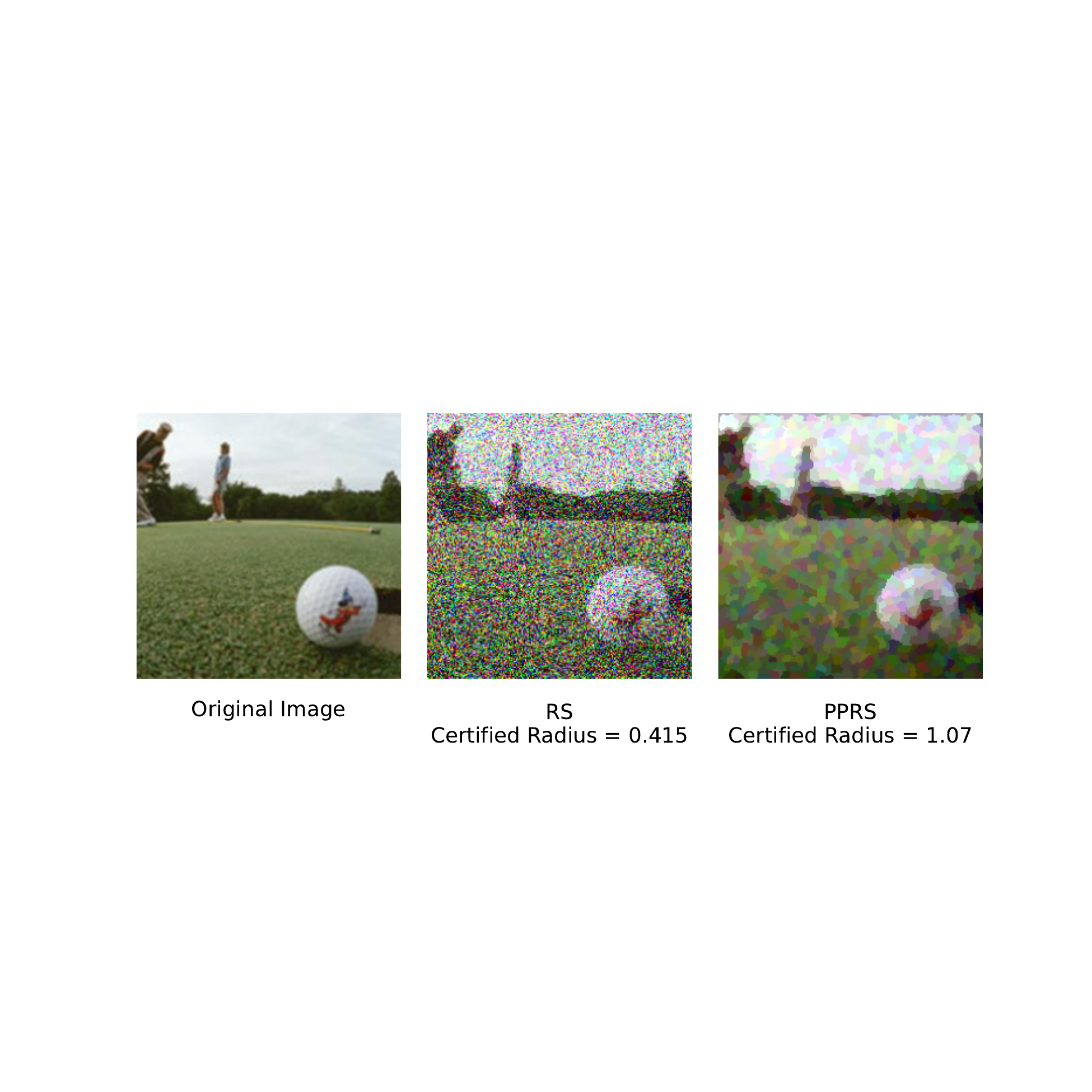}\\ [1ex]
\includegraphics[trim={2cm 11.55cm 2cm 11.55cm}, width=0.6\linewidth]{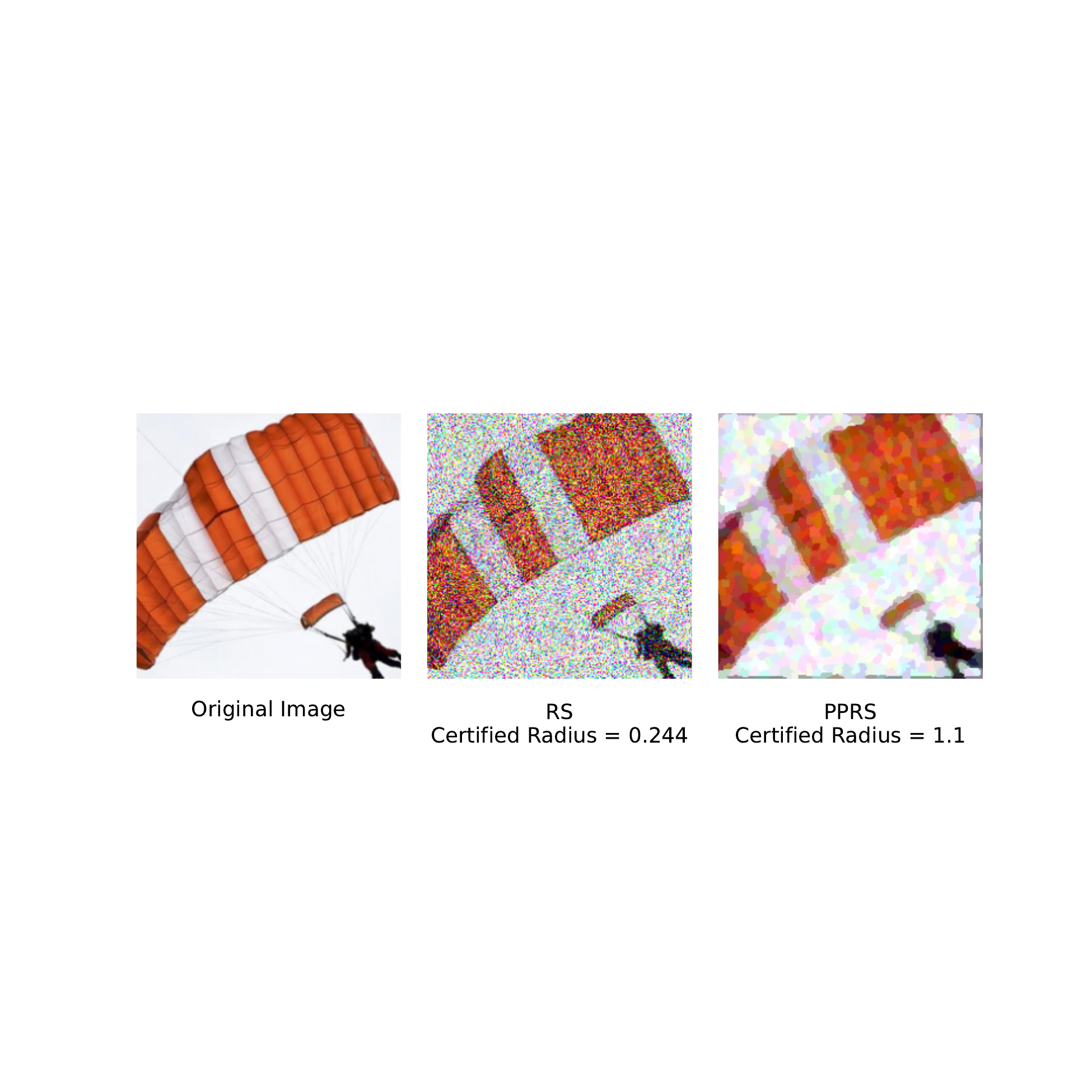}\\ [1ex]
\end{tabular}

\caption{Comparison of the randomized smoothing (RS) and the proposed PPRS under Gaussian noise with $\sigma=0.5$.}
\label{fig:cont_int}%
\end{figure}
A standard approach to certifiably robust classification is applying randomized smoothing to the DNN classifier, where the DNN's input is perturbed by a random perturbation. As demonstrated in \cite{cohen2019certified}, an additive Gaussian noise with an isotropic covariance matrix $\sigma^2 I$ can result in certified robustness against perturbations with bounded $L_2$-norm, where the certified prediction radius depends on the product of the noise standard deviation $\sigma$ and the confidence score of the DNN classifier under the additive noise. Consequently, while a higher noise variance may seem to offer a greater certified radius, the resulting drop in the prediction confidence under a higher noise variance could lead to 
 a weaker robustness certificate. 
Therefore, a potential idea for boosting the certified robustness radius of the randomized smoothing approach is to improve the classifier's prediction confidence score, which can be interpreted as the visibility of the input image, for a noisy image under a high noise variance. 

In this work, we propose the \emph{Pixel Partitioning-based Randomized Smoothing (PPRS)} method to improve the visibility of noisy images and, as a result, the certified prediction radius of deep neural network classifiers. Following the proposed PPRS method, the classifier performs a partitioning of the noisy pixels and assigns the mean pixel intensity of the partition's pixels to the pixels in every pixel partition. If the choice of the pixel partitions correlates with the image semantics, then one can expect that the partitioning-based averaging of pixel intensities in PPRS will considerably preserve the visibility of the input image. In contrast, since the additive Gaussian noise is generated independently at different pixels, the partition-based averaging can significantly reduce the effective variance of the Gaussian noise. As a result, under a semantically meaningful partitioning of the image pixels, the PPRS transformation is expected to improve the signal-to-noise ratio of the randomly perturbed image. For example, Figure~\ref{fig:cont_int} displays three ImageNet examples where the PPRS approach improves the visibility of the noisy image and thus the certified radius of the robust classification.

To perform a semantically relevant partitioning of an image’s pixels, we apply standard \textit{super-pixel} methods \cite{felzenszwalb2006efficient,vedaldi2008quick,achanta2012slic}, which are widely used in the computer vision literature. Super-pixels are commonly generated by unsupervised clustering of the pixels into subareas with similar pixel intensities. Since super-pixel algorithms do not use knowledge of other samples, they can avoid overfitting issues. Therefore, they are capable of mapping a high-dimensional image input to a super-pixel set with significantly lower dimensions. 

We show that the standard certified classification guarantee for randomized smoothing in \cite{cohen2019certified} can be extended to the PPRS classification rule under both statically and dynamically selected partitions. For a static selection of the pixel groups, the effective variance of the isotropic Gaussian noise is shown to drop by the partition size factor. Furthermore, for dynamically selected partitions, such as super-pixels, we analyze and bound the changes in the pixel clusters under the partition-based randomized smoothing in PPRS. 

Finally, we numerically evaluate the certified accuracy achieved by the standard Gaussian smoothing and the super-pixel-based PPRS algorithm on standard MNIST, CIFAR-10, and ImageNet datasets. Our empirical results suggest that PPRS can considerably improve the certified accuracy of neural network classifiers. Also, we visualize the randomly perturbed samples to show the improved visual quality of the noisy image after applying the PPRS transformation, which could result in stronger robustness certificates compared to vanilla randomized smoothing. 

The following is a summary of this work's main contributions:
\begin{itemize}
    \item Develop PPRS as a partitioning-based randomized smoothing approach to certified robust classification,
    \item Propose the application of super-pixels to improve the visibility of images under PPRS-smoothed random noise,
    \item Conduct a numerical study comparing the certified accuracy of vanilla and PPRS randomized smoothing methods.
\end{itemize}

\section{Related Work}
Certified robustness against adversarial attacks has been extensively studied in the literature. The randomized smoothing approach in \cite{cohen2019certified} provides a standard framework to certify the predictions of a general classifier against adversarial noise. This approach has been extended to adversarial perturbations with bounded $L_0$-norm \cite{levine2020robustness,levine2020randomized}, $L_1$-norm \cite{levine2021improved}, and $L_\infty$-norm \cite{zhang2021towards}. The randomized smoothing technique has also been utilized for non-classification tasks, including the image segmentation \cite{pmlr-v139-fischer21a} and community detection \cite{jia2020certified}. In addition, \cite{pmlr-v119-yang20c} proposes a general framework, extending \cite{cohen2019certified}'s method to find a robust certified radius for any noise distribution on different norms ($L_\infty$, $L_1$ and $L_2$ ).
\cite{pmlr-v168-anderson22a} proposes a new approach to certified robustness that uses training data and a direction oracle encoding information about the decision boundary, which is only partially based on randomized smoothing to certify robustness.

Regarding the extensions of the randomized smoothing approach, \cite{NEURIPS2020_300891a6} provides a framework extending \cite{cohen2019certified}'s method, but as also discussed in the paper it may not be sufficiently effective against $L_2$-norm-bounded perturbations. 
\cite{ijcai2022p467} provides a novel approach to randomized smoothing in settings with multiplicative parameters of input transformations where a gamma correction perturbation is utilized.
Concerning the applications of certified robustness, \cite{ijcai2023p767} provides an adversarial framework with certified robustness guarantees for time series data using its statistical features.

On the other hand, the effectiveness of randomized smoothing has been evaluated and examined in \cite{maho2022randomized}. This paper reports the gap between certified accuracy of the randomized smoothing methods and the robustness achieved by standard adversarial training methods. \cite{maho2022randomized}'s findings are consistent with our observation of the dropped confidence score of the classifier for noisy data, and the proposed PPRS attempts to close \cite{maho2022randomized}'s observed gap. 
Also, \cite{pmlr-v119-kumar20b} discusses the curse of dimensionality of randomized smoothing, which can be mitigated by applying our proposed partitioning scheme.  This property of compact feature representations and their effects on 
adversarial robustness and generalizability have also been discussed in \cite{9414696}.

\section{Preliminaries}
\subsection{Classification and Adversarial Attacks}
Consider a labeled sample $(\mathbf{x},y)$ where $\mathbf{x}\in\mathcal{X}\subseteq\mathbb{R}^d$ denotes a $d$-dimensional feature vector and $y\in\mathcal{Y}=\{1,2,\ldots, k\}$ is a $k$-ary label. A prediction function $f:\mathcal{X}\rightarrow \mathcal{Y}$ maps the input $\mathbf{x}$ to a label in $\mathcal{Y}$. We use the standard 0/1-loss to evaluate the performance of the classifier on a labeled sample, i.e. the loss will be zero if the prediction is correct and, otherwise, will be one:
\begin{equation*}
    \ell_{0/1}\bigl(f(\mathbf{x}),y\bigr) = \begin{cases}
    0\quad &\text{\rm if}\: f(\mathbf{x})=y,\\
    1\quad &\text{\rm if}\: f(\mathbf{x})\neq y.
    \end{cases}
\end{equation*}

However, standard neural network classifiers have been frequently observed to lack robustness against norm-bounded adversarial perturbations \cite{szegedy2013intriguing,biggio2013evasion,goodfellow2015explaining}. To generate an adversarial perturbation $\boldsymbol{\delta}\in\mathbb{R}^d$ with $\epsilon$-bounded norm, one can solve the following optimization problem:
\begin{equation}\label{Eq: Adv Attack Generation}
    \max_{\boldsymbol{\delta}:\: \Vert\delta \Vert\le \epsilon}\; \ell\bigl(f\bigl(\mathbf{x}+\boldsymbol{\delta}\bigr),y\bigr), 
\end{equation}
where $\ell$ is commonly chosen as a smoothed version of the $0/1$ loss function, e.g. the cross entropy loss. Also, in this work, we focus on standard $L_2$-norm and $L_{\infty}$-norm-based adversarial perturbations in our theoretical and numerical analysis, which means we choose the norm $\Vert\cdot\Vert$ in \eqref{Eq: Adv Attack Generation} to be one of these norm functions.

\subsection{Certified Robustness via Randomized Smoothing}
Certifying a classifier's robustness against norm-bounded perturbations is required for high-stake machine learning applications. The randomized smoothing approach in \cite{cohen2019certified} is a standard framework to achieve certified adversarial robustness. 
Following this approach, we consider an isotropic Gaussian vector $\mathbf{Z}\sim\mathcal{N}(\mathbf{0},\sigma^2I_{d\times d})$, where $I_{d\times d}$ denotes the $d$-dimensional identity matrix, and define the following prediction rule $f^{\mathrm{GS}(\sigma)}$ for a given prediction rule $f$:
\begin{equation*}
   f^{\mathrm{GS}(\sigma)}(\mathbf{x}) := \underset{c\in\mathcal{Y}}{\arg\!\max} \:\mathbb{P}\bigl(f(\mathbf{x+Z}) = c\bigr).
\end{equation*}
In other words, $ f^{\mathrm{GS}(\sigma)}$ outputs the most likely label for a noisy version of $\mathbf{x}$ perturbed with the zero-mean Gaussian vector $\mathbf{Z}$. \cite{cohen2019certified} proves the following robustness guarantee for the defined $ f^{\mathrm{GS}(\sigma)}$:
\begin{theorem}[Theorem 1 from \cite{cohen2019certified}]\label{Thm: Thm 1 Cohen}
Consider a prediction rule $f:\mathcal{X}\rightarrow\mathcal{Y}$. For sample $\mathbf{x}\in\mathcal{X}$ classified as $f^{\mathrm{GS}(\sigma)}(\mathbf{x}) = c_A$, we define the prediction confidence score as $C(\mathbf{x})=\frac{1}{2}\bigl(\Phi^{-1}(p_A(\mathbf{x})) - \Phi^{-1}(p_B(\mathbf{x}))\bigr)$ where $\Phi^{-1}:(0,1)\rightarrow\mathbb{R}$ is the inverse-CDF of the standard Gaussian distribution $\mathcal{N}(0,1)$ and $p_A(\mathbf{x}),\, p_B(\mathbf{x})$ are defined as:
\begin{align*}
    p_A(\mathbf{x}) &= \mathbb{P}\bigl(f(\mathbf{x+Z}) = c_A \bigr), \\
    p_B(\mathbf{x}) &= \max_{c\neq c_A}\:\mathbb{P}\bigl(f(\mathbf{x+Z}) = c \bigr).
\end{align*}
Then, $f^{\mathrm{GS}(\sigma)}(\mathbf{x}+\boldsymbol{\delta})=c_A$ for every $L_2$-norm-bounded perturbation $\Vert \boldsymbol{\delta}\Vert_2\le \sigma C(\mathbf{x}).$
\end{theorem}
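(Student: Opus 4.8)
The plan is to reduce the robustness guarantee to two tail estimates for one-dimensional Gaussians, both governed by the Neyman--Pearson lemma. Fix a perturbation $\boldsymbol{\delta}$ with $\Vert\boldsymbol{\delta}\Vert_2\le\sigma C(\mathbf{x})$ and write $P=\mathcal{N}(\mathbf{x},\sigma^2 I)$ and $Q=\mathcal{N}(\mathbf{x}+\boldsymbol{\delta},\sigma^2 I)$ for the laws of the clean and perturbed noisy inputs. Let $A=\{\mathbf{z}:f(\mathbf{z})=c_A\}$ be the decision region of the top class, so that $P(A)\ge p_A(\mathbf{x})$ by definition. To certify that $f^{\mathrm{GS}(\sigma)}(\mathbf{x}+\boldsymbol{\delta})=c_A$, it suffices to show that under $Q$ the mass on $A$ strictly exceeds the mass assigned to every competing class. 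The core step is therefore to lower bound $Q(A)$ using only the constraint $P(A)\ge p_A(\mathbf{x})$, and symmetrically to upper bound the $Q$-mass of any runner-up region under the constraint that its $P$-mass is at most $p_B(\mathbf{x})$.

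First I would invoke the Neyman--Pearson lemma: among all measurable sets with $P$-mass at least $p_A(\mathbf{x})$, the one minimizing $Q$-mass has the form $\{\mathbf{z}:\mathrm{d}Q/\mathrm{d}P(\mathbf{z})\le t\}$ for a suitable threshold $t$. Because $P$ and $Q$ are isotropic Gaussians sharing the covariance $\sigma^2 I$, the log-likelihood ratio is an affine function of $\langle\boldsymbol{\delta},\mathbf{z}-\mathbf{x}\rangle$, so these worst-case sets are exactly half-spaces with normal direction $\boldsymbol{\delta}$. This collapses the problem onto the single real coordinate $\langle\boldsymbol{\delta},\mathbf{z}-\mathbf{x}\rangle$, which is distributed as $\mathcal{N}(0,\sigma^2\Vert\boldsymbol{\delta}\Vert_2^2)$ under $P$ and as $\mathcal{N}(\Vert\boldsymbol{\delta}\Vert_2^2,\sigma^2\Vert\boldsymbol{\delta}\Vert_2^2)$ under $Q$.

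Next I would fix the worst-case half-space to have $P$-mass exactly $p_A(\mathbf{x})$, which pins its boundary at $\sigma\Vert\boldsymbol{\delta}\Vert_2\,\Phi^{-1}(p_A(\mathbf{x}))$, and then evaluate its $Q$-mass by accounting for the shifted mean. A direct computation with $\Phi$ yields
\begin{equation*}
\mathbb{P}\bigl(f(\mathbf{x}+\boldsymbol{\delta}+\mathbf{Z})=c_A\bigr)\;\ge\;\Phi\Bigl(\Phi^{-1}\bigl(p_A(\mathbf{x})\bigr)-\tfrac{\Vert\boldsymbol{\delta}\Vert_2}{\sigma}\Bigr).
\end{equation*}
The mirror-image argument, maximizing $Q$-mass subject to $P$-mass at most $p_B(\mathbf{x})$, gives the matching upper bound $\Phi\bigl(\Phi^{-1}(p_B(\mathbf{x}))+\Vert\boldsymbol{\delta}\Vert_2/\sigma\bigr)$ for every competing class. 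Since $\Phi$ is strictly increasing, requiring the first quantity to dominate the second reduces to the scalar inequality $\Phi^{-1}(p_A(\mathbf{x}))-\Phi^{-1}(p_B(\mathbf{x}))>2\Vert\boldsymbol{\delta}\Vert_2/\sigma$, which is precisely $\Vert\boldsymbol{\delta}\Vert_2<\sigma C(\mathbf{x})$, the claimed radius.

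I expect the main obstacle to be the careful application of the Neyman--Pearson lemma, specifically justifying that the likelihood-ratio optimal set is exactly the half-space and treating the measure-zero boundary so that the two bounds are simultaneously tight for the lower estimate on $c_A$ and the upper estimate on the runner-up. The Gaussian tail computations are routine once the problem is projected onto the $\boldsymbol{\delta}$-direction; the conceptual work lies in verifying that half-spaces are the worst case for \emph{both} optimizations and in confirming that the monotonicity of $\Phi$ converts the two probability bounds into the clean linear condition on $\Phi^{-1}(p_A(\mathbf{x}))-\Phi^{-1}(p_B(\mathbf{x}))$.
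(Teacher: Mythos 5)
Your proposal is correct and follows essentially the same route as the proof of this theorem in Cohen et al.\ (2019), which this paper cites rather than reproves: the Neyman--Pearson reduction showing half-spaces are the worst-case sets for two isotropic Gaussians with common covariance, projection onto the $\boldsymbol{\delta}$-direction, the two tail bounds $\Phi\bigl(\Phi^{-1}(p_A(\mathbf{x}))-\Vert\boldsymbol{\delta}\Vert_2/\sigma\bigr)$ and $\Phi\bigl(\Phi^{-1}(p_B(\mathbf{x}))+\Vert\boldsymbol{\delta}\Vert_2/\sigma\bigr)$, and monotonicity of $\Phi$. The only discrepancy is at the boundary: your argument (like the original theorem) certifies the prediction under the strict inequality $\Vert\boldsymbol{\delta}\Vert_2 < \sigma C(\mathbf{x})$, whereas the paper's restatement with ``$\le$'' is slightly loose, since at $\Vert\boldsymbol{\delta}\Vert_2 = \sigma C(\mathbf{x})$ the two bounds can tie and the argmax need not be $c_A$.
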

In the next sections, we will discuss the trade-off between the standard deviation parameter $\sigma$ for the additive Gaussian noise and the averaged prediction confidence score $c(\mathbf{x})$ under the random Gaussian perturbation.


\section{Partition-based Randomized Smoothing}
As discussed earlier, the certified prediction robustness in the randomized smoothing approach is the product of the standard deviation $\sigma$ of the Gaussian noise and the prediction confidence score under the additive noise. However, if $\sigma$ is chosen to be moderately large, the average prediction confidence score could drop significantly, leading to a smaller certified prediction radius and hence lower certified accuracy. Therefore, one idea to improve the certified performance by randomized smoothing is to develop a method to improve the visibility of images affected by additive Gaussian noise.

\begin{figure}[t]
    \centering
    \begin{subfigure}{\textwidth}
    \includegraphics[trim={5cm 22.3cm 5cm 20.3cm},clip, scale=0.4]
    {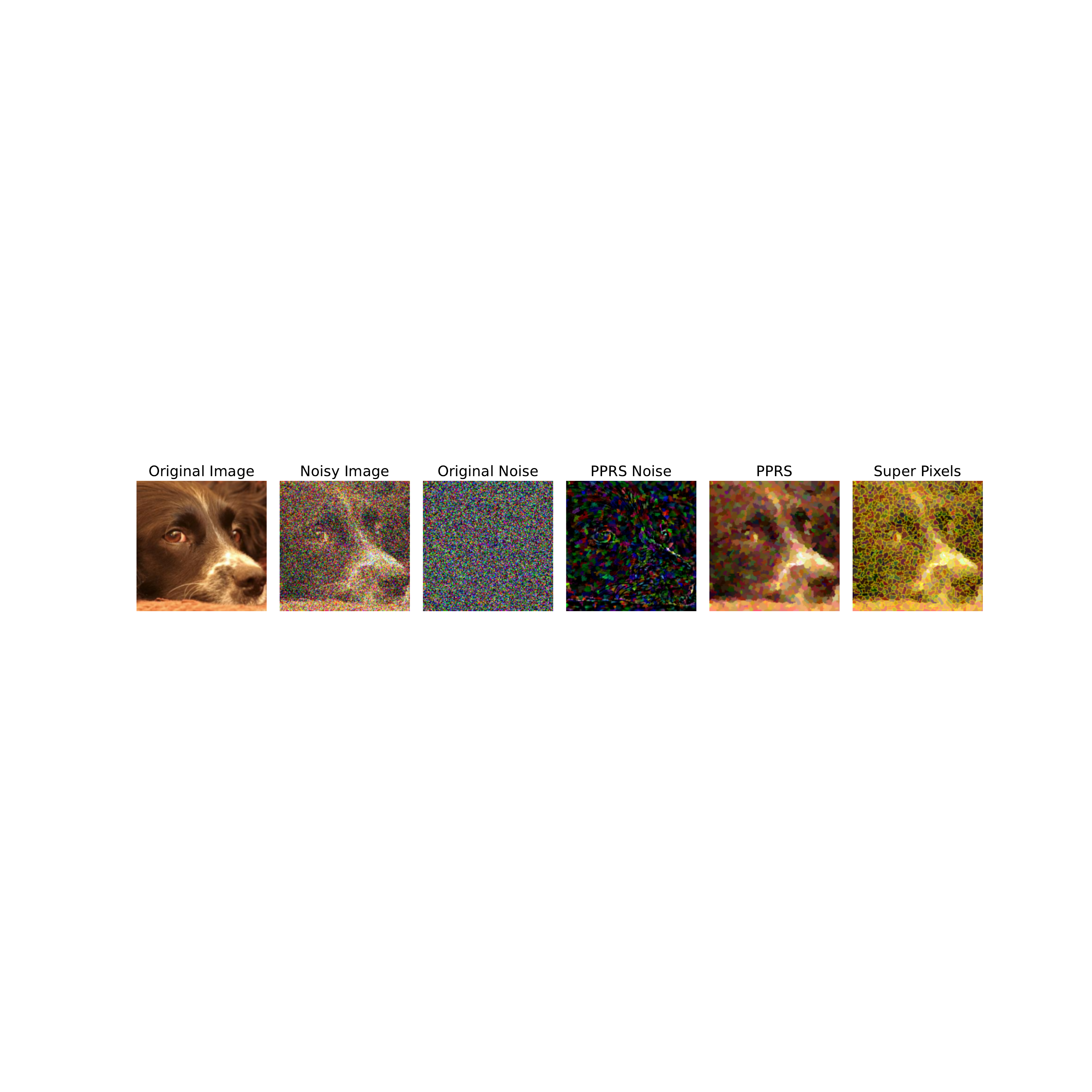}
    \end{subfigure} 
    \begin{subfigure}{\textwidth}
    \includegraphics[trim={5cm 22.3cm 5cm 22.3cm},clip, scale=0.4]
    {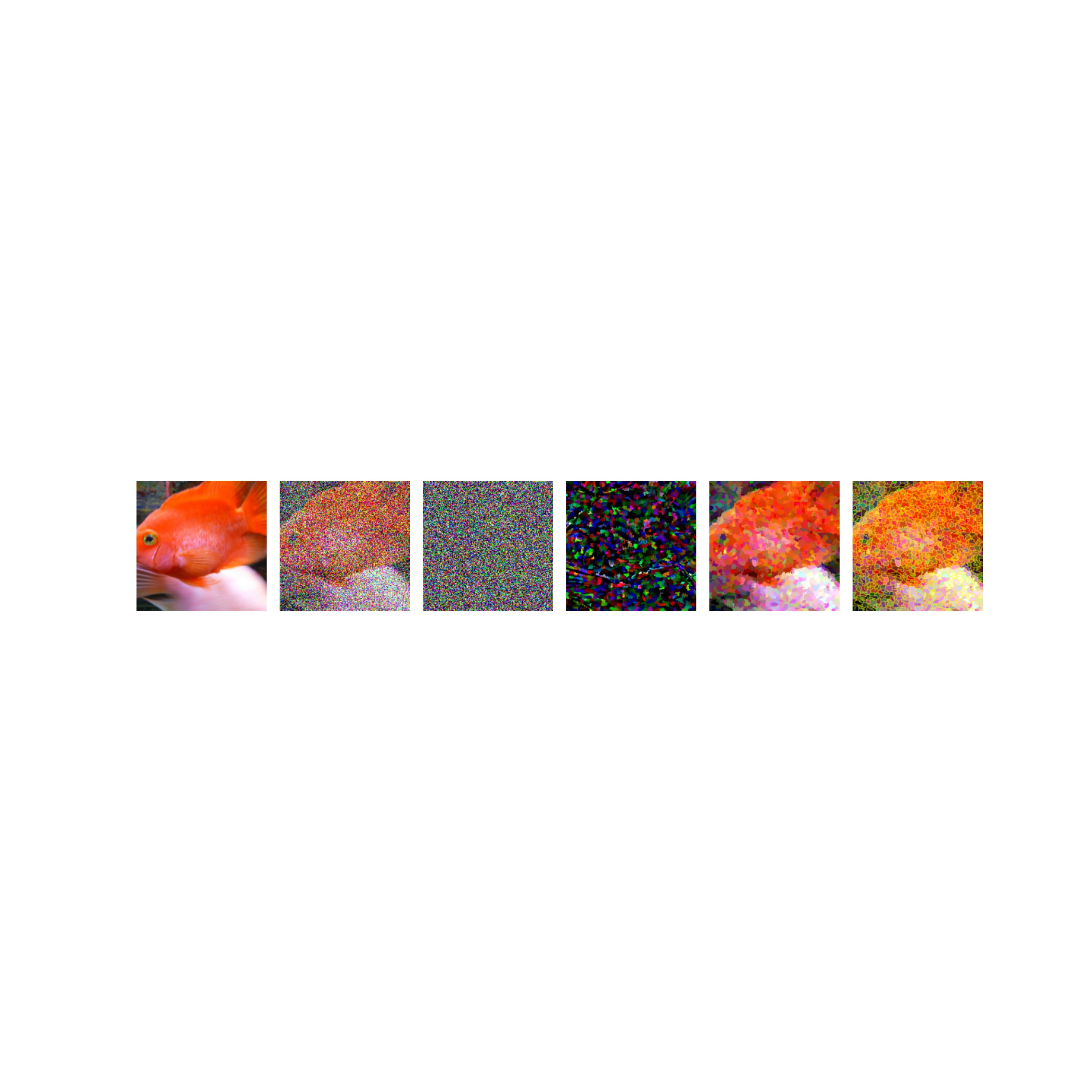}
    \end{subfigure} 
    \begin{subfigure}{\textwidth}
    \includegraphics[trim={5cm 22.3cm 5cm 22.3cm},clip, scale=0.4]{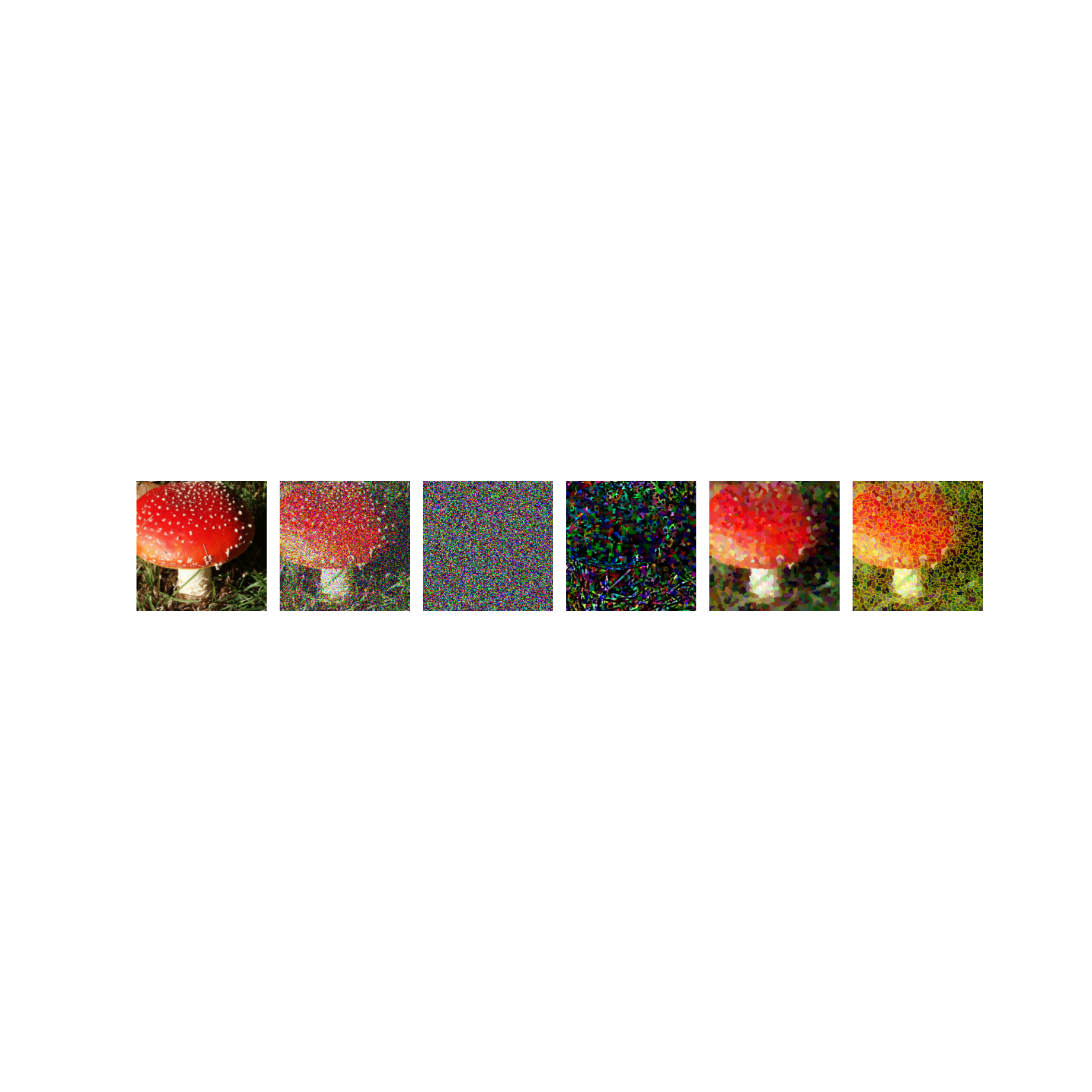}
    \end{subfigure} 
    \begin{subfigure}{\textwidth}
    \includegraphics[trim={5cm 21.3cm 5cm 22.3cm},clip, scale=0.4]{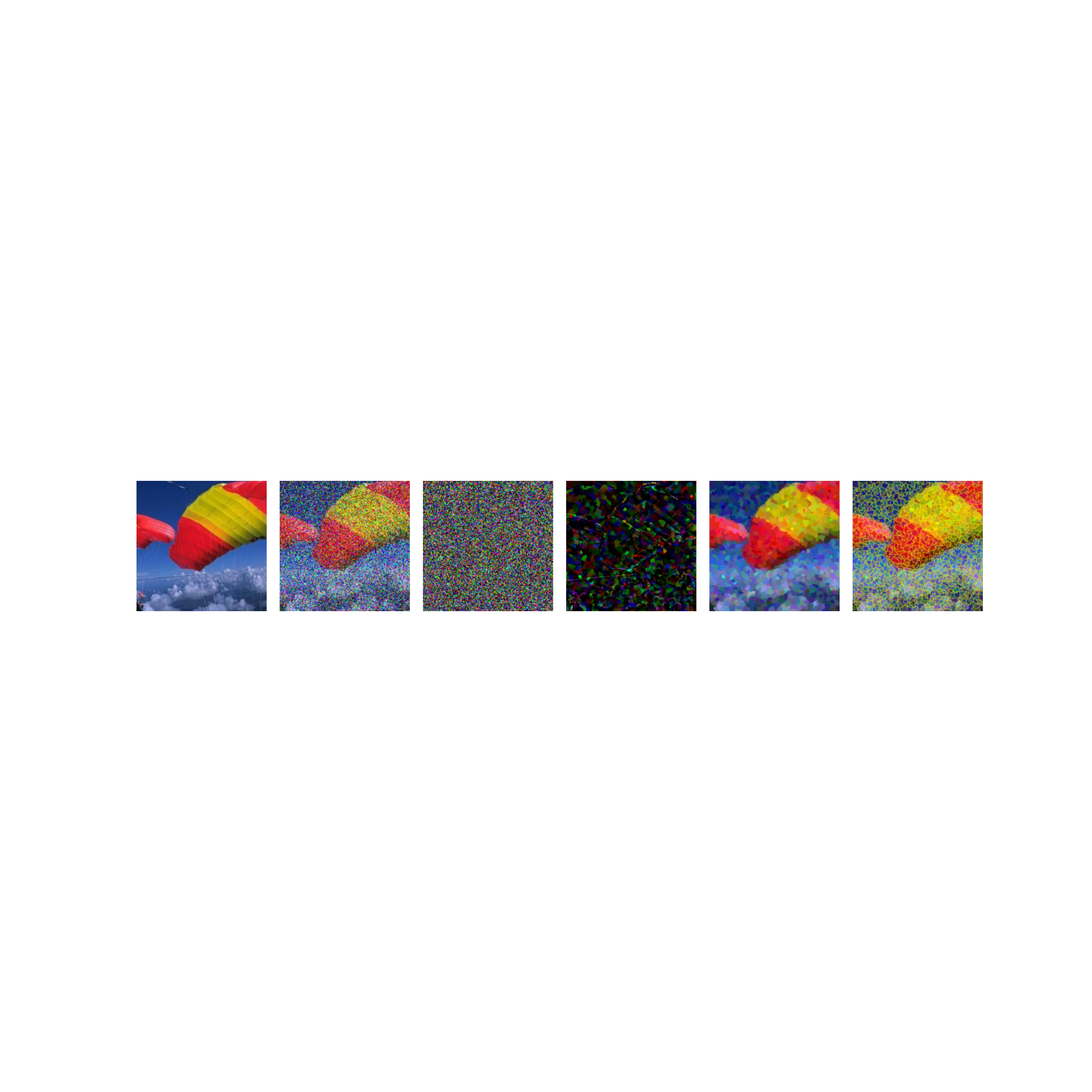}
    \end{subfigure}\hfil 
    \caption{  The two top and bottom rows demonstrate pictures found after adding Gaussian Noise with $\sigma=0.75$ and $\sigma=0.5$ and applying SLIC SuperPixel with 1000 partitions. }
    \label{fig:imgs}

\end{figure}

\begin{figure*}[h]
    \centering
    \begin{subfigure}{0.485\textwidth}
  \includegraphics[trim={0 0 0 0},clip ,width=\linewidth]{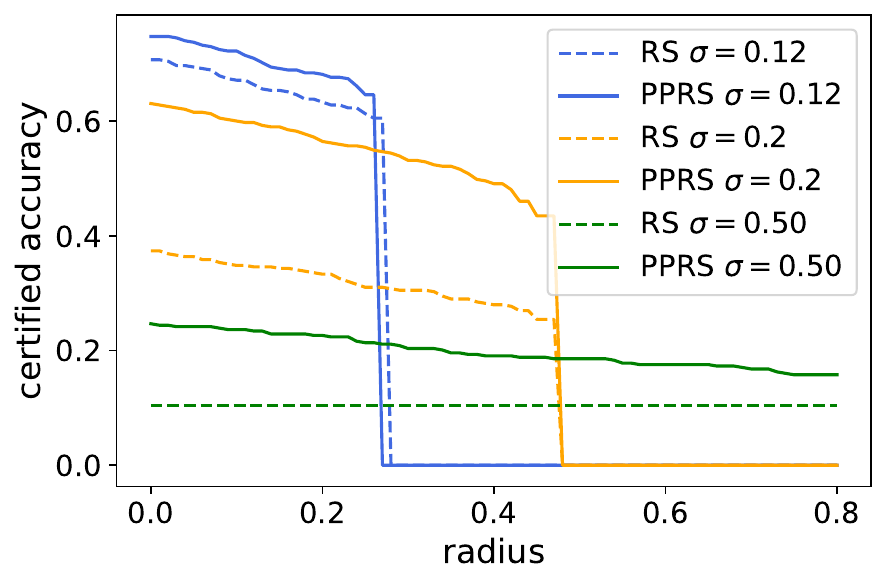}
  \caption{ImageNet results}
  \label{fig:MainExp ImageNetSub}
\end{subfigure}\hfil 
\begin{subfigure}{0.485\textwidth}
  \includegraphics[trim={0 0 0 0},clip ,width=\linewidth]{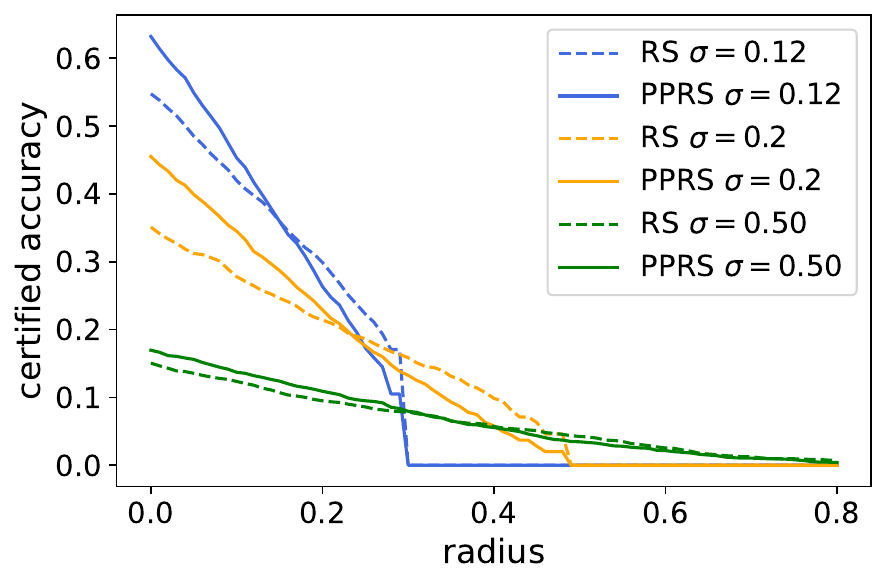}
  \caption{CIFAR-10 results}
  \label{fig:MainExp Cifar}
\end{subfigure}\hfil 
  \caption{PPRS and RS Certified Accuracy vs. $L_2$-Perturbation-Radius for Gaussian noise $\mathcal{N}(\mathbf{0},\sigma^2 I)$}
  \label{fig:MainExp ImageNet}
\end{figure*}

\begin{figure*}[h]
    \centering 
\begin{subfigure}{0.485\textwidth}
  \includegraphics[trim={0 0 0 0},clip ,width=\linewidth]{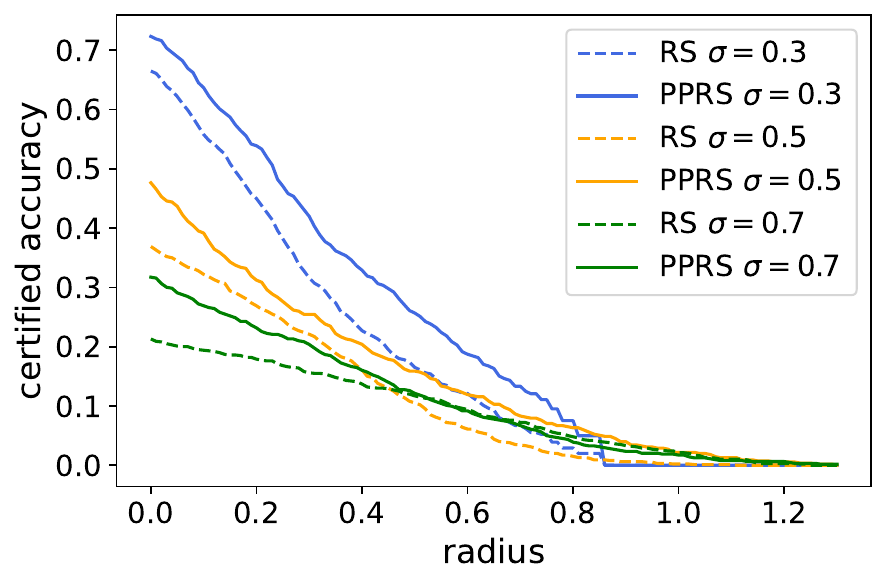}
  \caption{Fashion MNIST results}
  \label{fig:MainExp fashionMnist}
\end{subfigure}\hfil 
\begin{subfigure}{0.485\textwidth}
  \includegraphics[trim={0 0 0 0},clip ,width=\linewidth]{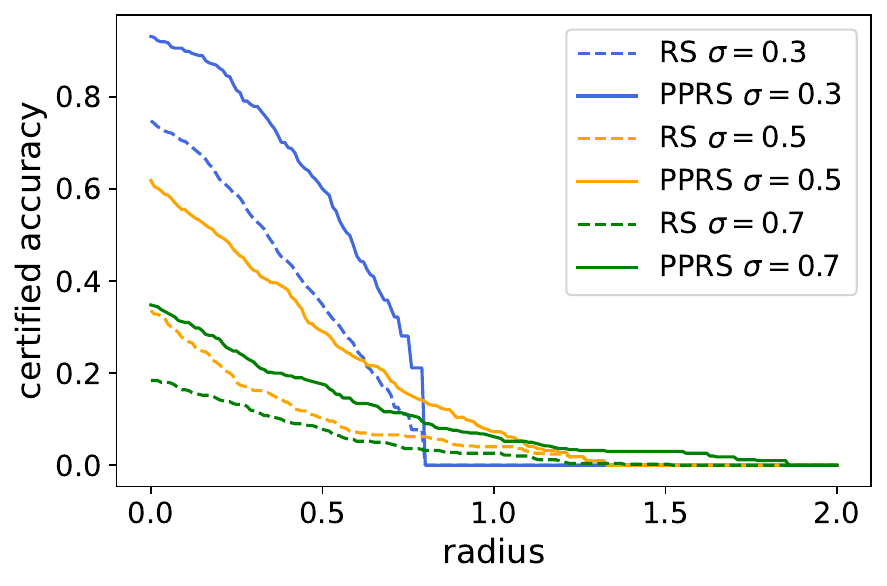}
  \caption{MNIST results}
  \label{fig:MainExp Mnist}
\end{subfigure}\hfil 
\caption{PPRS and RS Certified Accuracy vs. $L_2$-Perturbation-Radius for Gaussian noise $\mathcal{N}(\mathbf{0},\sigma^2 I)$ with different $\sigma$'s.}
\label{fig:plots}
\end{figure*}

To achieve this goal, we propose the \emph{Pixel Partitioning-based Randomized Smoothing (PPRS)}, according to which we partition the pixels into multiple groups and use the averaged pixel intensities within each partition. Mathematically, we group the pixels into $p$ partitions in $\mathbb{S}(\mathbf{x}):=\{S_1(\mathbf{x}), \cdots, S_p(\mathbf{x})\}$ where $S_i(\mathbf{x})$ denotes the subset of pixels corresponding to the $i$th superpixel. We define the partition averaging matrix $\mathbf{A}_{\mathbb{S}} \in [0, 1]^{d\times p}$ as
\begin{equation}
  \mathbf{A}_{i,j} \, :=\, \begin{cases}
  \frac{1}{|S_j|}\quad &\text{\rm if Pixel $i$}\in S_j \\
  0\quad &\text{\rm otherwise.}
  \end{cases}  
\end{equation} 
Note that the PPRS transformation of an image $\mathbf{x}$ will be $A_\mathbb{S}\mathbf{x}$, resulting in the following definition of the PPRS-based classification rule.
\begin{equation}
\mathrm{PPRS}(f,\mathbf{x}) := f\bigl(\mathbf{A}_{\mathbb{S}}\mathbf{x}\bigr).
\label{equal:group}
\end{equation}
Applying the PPRS transformation to noisy image $\mathbf{x}+\mathbf{Z}$, where $\mathbf{Z}\sim\mathcal{N}(\mathbf{0},\sigma^2I)$, results in an effective noise standard deviation $\frac{\sigma}{\sqrt{|S_j|}}$ for the pixels in partition $S_j$ because the Gaussian noise for $S_j$'s pixels are independent. Therefore, under a static selection of the partitions, when $\mathcal{S}(\mathbf{x})$ does not depend on $\mathbf{x}$, we have the following corollary of Theorem \ref{Thm: Thm 1 Cohen}.
\begin{corollary}\label{Cor: Cor 1 Static}
Suppose that the partitioning function $\mathcal{S}(\mathbf{x})$ assigns the same partition to every $\mathbf{x}$. Then, according to the definitions of Theorem \ref{Thm: Thm 1 Cohen}, $\mathrm{PPRS}(f^{\mathrm{GS}(\sigma)},\mathbf{x}+\boldsymbol{\delta})$ outputs the same prediction for every $\boldsymbol{\delta}$ satisfying $$\bigl\Vert A_S\boldsymbol{\delta}\bigr\Vert_2 \,\le\, \sigma C_{\mathrm{PPRS}(f^{\mathrm{GS}(\sigma)})}(\mathbf{x})$$ 
where $C_{\mathrm{PPRS}(f^{\mathrm{GS}(\sigma)})}$ denotes the prediction confidence score for $\mathrm{PPRS}(f^{\mathrm{GS}(\sigma)})$. 
\end{corollary}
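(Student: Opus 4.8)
The plan is to reduce the PPRS-smoothed classifier to an ordinary Gaussian-smoothed classifier acting on the lower-dimensional space of partition averages, so that \Cref{Thm: Thm 1 Cohen} can be invoked directly. Since the partition $\mathbb{S}$ is fixed, the averaging operator $\mathbf{A}_{\mathbb{S}}$ is a constant linear map, and both the averaged signal $\mathbf{A}_{\mathbb{S}}\mathbf{x}$ and the averaged noise $\mathbf{A}_{\mathbb{S}}\mathbf{Z}$ are constant within each partition. I would therefore factor $\mathbf{A}_{\mathbb{S}}$ through the $p$-dimensional partition-mean coordinates: let $\pi(\mathbf{x})\in\mathbb{R}^p$ collect the per-partition averages $\pi(\mathbf{x})_j=\frac{1}{|S_j|}\sum_{i\in S_j}x_i$, and note that $\mathrm{PPRS}(f,\mathbf{x}+\mathbf{Z})=\tilde f\bigl(\pi(\mathbf{x})+\pi(\mathbf{Z})\bigr)$, where $\tilde f$ is the base classifier precomposed with the map that re-expands partition values back to pixels.

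The first technical step is to identify the law of the averaged noise. Because the coordinates of $\mathbf{Z}\sim\mathcal{N}(\mathbf{0},\sigma^2 I)$ are independent and the partitions are disjoint, $\pi(\mathbf{Z})$ is a Gaussian vector with independent entries and diagonal covariance $\sigma^2\,\mathrm{diag}(1/|S_1|,\dots,1/|S_p|)$; this is precisely the $\sigma/\sqrt{|S_j|}$ effective standard deviation noted above. Since \Cref{Thm: Thm 1 Cohen} requires an \emph{isotropic} Gaussian, the key device is a diagonal change of variables $D=\mathrm{diag}(\sqrt{|S_1|},\dots,\sqrt{|S_p|})$. Writing $\mathbf{v}=D\,\pi(\mathbf{x})$ and $\hat f(\mathbf{v})=\tilde f(D^{-1}\mathbf{v})$, one checks that $D\,\pi(\mathbf{Z})\sim\mathcal{N}(\mathbf{0},\sigma^2 I_p)$, so that the PPRS-smoothed prediction at $\mathbf{x}$ coincides with the ordinary Gaussian-smoothed classifier $\hat f^{\mathrm{GS}(\sigma)}$ evaluated at $\mathbf{v}=D\,\pi(\mathbf{x})$. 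In particular the induced class probabilities, and hence the confidence score produced by \Cref{Thm: Thm 1 Cohen} for $\hat f$, are exactly $p_A(\mathbf{x}),p_B(\mathbf{x})$ and $C_{\mathrm{PPRS}(f^{\mathrm{GS}(\sigma)})}(\mathbf{x})$.

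With the isotropic reduction in place, I would apply \Cref{Thm: Thm 1 Cohen} to $\hat f$ at the point $D\,\pi(\mathbf{x})$, obtaining that the prediction is unchanged under any $L_2$ shift $\boldsymbol{\eta}$ of $\mathbf{v}$ with $\Vert\boldsymbol{\eta}\Vert_2\le \sigma\,C_{\mathrm{PPRS}(f^{\mathrm{GS}(\sigma)})}(\mathbf{x})$. The final step is to translate this radius back to the pixel domain: an input perturbation $\boldsymbol{\delta}$ shifts $\mathbf{v}$ by $\boldsymbol{\eta}=D\,\pi(\boldsymbol{\delta})$, and a direct computation gives the identity $\Vert D\,\pi(\boldsymbol{\delta})\Vert_2^2=\sum_{j}\frac{1}{|S_j|}\bigl(\sum_{i\in S_j}\delta_i\bigr)^2=\Vert\mathbf{A}_{\mathbb{S}}\boldsymbol{\delta}\Vert_2^2$, since $\mathbf{A}_{\mathbb{S}}\boldsymbol{\delta}$ repeats each partition average $|S_j|$ times. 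Substituting this identity yields exactly the stated condition $\Vert\mathbf{A}_{\mathbb{S}}\boldsymbol{\delta}\Vert_2\le\sigma\,C_{\mathrm{PPRS}(f^{\mathrm{GS}(\sigma)})}(\mathbf{x})$.

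I expect the main obstacle to be the mismatch between Cohen's isotropy assumption and the genuinely degenerate, anisotropic law of the averaged noise $\mathbf{A}_{\mathbb{S}}\mathbf{Z}$: the argument does not go through by plugging $\mathbf{A}_{\mathbb{S}}\mathbf{Z}$ into \Cref{Thm: Thm 1 Cohen} directly, and the whole reduction hinges on recognizing that collapsing to partition coordinates and rescaling by $D$ simultaneously removes the degeneracy and restores isotropy. Verifying that the rescaled-space $L_2$ norm coincides with $\Vert\mathbf{A}_{\mathbb{S}}\boldsymbol{\delta}\Vert_2$ is then the one computation that must be done carefully, but it is elementary once the change of variables is fixed.
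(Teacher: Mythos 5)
Your proof is correct: the covariance computation for $\pi(\mathbf{Z})$, the whitening by $D$, and the norm identity $\Vert D\,\pi(\boldsymbol{\delta})\Vert_2 = \Vert \mathbf{A}_{\mathbb{S}}\boldsymbol{\delta}\Vert_2$ are all valid, and together they legitimately reduce the claim to \Cref{Thm: Thm 1 Cohen} applied in the $p$-dimensional partition coordinates. However, it takes a genuinely different and heavier route than the paper's. The paper presents the corollary as immediate, and the short argument it implicitly relies on is this: Cohen's theorem holds for an \emph{arbitrary} base classifier under isotropic noise in $\mathbb{R}^d$, so one may apply it directly to the composed classifier $g := f\circ \mathbf{A}_{\mathbb{S}}$ (a fixed, valid classifier because the partition is static), whose class probabilities under $\mathbf{x}+\mathbf{Z}$ are by definition the PPRS probabilities; this certifies the prediction for all $\Vert\boldsymbol{\eta}\Vert_2 \le \sigma C_{\mathrm{PPRS}(f^{\mathrm{GS}(\sigma)})}(\mathbf{x})$. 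Then, since $\mathbf{A}_{\mathbb{S}}$ is idempotent (the lemma proved in the appendix, $\mathbf{A}_{\mathbb{S}}^2=\mathbf{A}_{\mathbb{S}}$), the smoothed prediction at $\mathbf{x}+\boldsymbol{\delta}$ coincides with that at $\mathbf{x}+\mathbf{A}_{\mathbb{S}}\boldsymbol{\delta}$, so the certificate may be invoked with $\boldsymbol{\eta}=\mathbf{A}_{\mathbb{S}}\boldsymbol{\delta}$, yielding exactly the stated condition on $\Vert\mathbf{A}_{\mathbb{S}}\boldsymbol{\delta}\Vert_2$. In particular, the ``main obstacle'' you flag---the anisotropy and degeneracy of $\mathbf{A}_{\mathbb{S}}\mathbf{Z}$---dissolves in that framing: one never pushes the noise through $\mathbf{A}_{\mathbb{S}}$ at all, but keeps the isotropic $\mathbf{Z}$ and absorbs the averaging into the base classifier. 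What your route buys in exchange is that it makes rigorous the paper's motivating but informal claim of an effective per-partition noise level $\sigma/\sqrt{|S_j|}$, by exhibiting PPRS smoothing as an honest isotropic Gaussian smoothing in $p$ dimensions; it also shows transparently why the certified set $\{\boldsymbol{\delta}: \Vert\mathbf{A}_{\mathbb{S}}\boldsymbol{\delta}\Vert_2\le\sigma C\}$ is at least as large as the naive set $\{\boldsymbol{\delta}: \Vert\boldsymbol{\delta}\Vert_2\le\sigma C\}$, since $\mathbf{A}_{\mathbb{S}}$ is an orthogonal projection. One caveat worth flagging for either argument: the paper's literal definition $\mathrm{PPRS}(f^{\mathrm{GS}(\sigma)},\mathbf{x}) = f^{\mathrm{GS}(\sigma)}(\mathbf{A}_{\mathbb{S}}\mathbf{x})$ injects the smoothing noise \emph{after} averaging, whereas your construction (like the paper's own narrative, figures, and algorithm) averages the noisy image $\mathbf{x}+\mathbf{Z}$; you prove the latter version, which is the one whose confidence score actually benefits from the noise reduction, so this mismatch lies in the paper's notation rather than in your mathematics.
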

Therefore, Corollary \ref{Cor: Cor 1 Static} implies that the effective standard deviation of the averaged pixel intensity in each partition $S_j$ will be $\frac{\sigma}{\sqrt{|S_j|}}$, which is smaller than $\sigma$ by a factor ${\sqrt{|S_j|}}$. Therefore, if the partitions chosen in $\mathbb{S}$ preserve the information of the image $\mathbf{X}$ with respect to the label $Y$, we expect a significant increase in the value of $C_{\mathrm{PPRS}(f^{\mathrm{GS}(\sigma)})}$ compared to $C_{f^{\mathrm{GS}(\sigma)}}$ due to the smaller effective noise variance. In practice, the proper choice of pixel partitions is expected to be image-dependent. As a result, we show the following extension of Corollary \ref{Cor: Cor 1 Static} to dynamically selected partitions of a general $S(\mathbf{x})$.
\begin{theorem}\label{Cor: Cor 1 Dynamic}
Consider a dynamic partitioning scheme $\mathcal{S}(\mathbf{x})$ that for $L_2$-operator norm $\Vert\cdot \Vert_2$ and every $\mathbf{x},\mathbf{x}'\in\mathcal{X}$ satisfies $\Vert A_{\mathcal{S}(\mathbf{x})}-  A_{\mathcal{S}(\mathbf{x}')}\Vert_2 \le \rho \Vert \mathbf{x}-\mathbf{x}'\Vert_2$. Then, given the definitions of Theorem \ref{Thm: Thm 1 Cohen}, $\mathrm{PPRS}(f^{\mathrm{GS}(\sigma)},\mathbf{x}+\boldsymbol{\delta})$ results in the same prediction for every $\boldsymbol{\delta}$ satisfying $$\bigl\Vert A_{\mathcal{S}(\mathbf{x})}\boldsymbol{\delta}\bigr\Vert_2 \, \le\, (1-\rho)\sigma C_{\mathrm{PPRS}(f^{\mathrm{GS}(\sigma)})}(\mathbf{x})$$ where $C_{\mathrm{PPRS}(f^{\mathrm{GS}(\sigma)})}$ denotes the prediction confidence score for $\mathrm{PPRS}(f^{\mathrm{GS}(\sigma)})$. 
\end{theorem}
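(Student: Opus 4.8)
The plan is to reduce this statement to a direct application of Cohen et al.'s guarantee (\cref{Thm: Thm 1 Cohen}) by regarding the Gaussian-smoothed classifier $f^{\mathrm{GS}(\sigma)}$ as acting on the \emph{transformed} image. Since $\mathrm{PPRS}(f^{\mathrm{GS}(\sigma)},\mathbf{x}) = f^{\mathrm{GS}(\sigma)}(A_{\mathcal{S}(\mathbf{x})}\mathbf{x})$, the clean input to the smoothed classifier is $\mathbf{y} := A_{\mathcal{S}(\mathbf{x})}\mathbf{x}$, and \cref{Thm: Thm 1 Cohen} certifies that the prediction is constant on the Euclidean ball of radius $\sigma C_{\mathrm{PPRS}(f^{\mathrm{GS}(\sigma)})}(\mathbf{x})$ centered at $\mathbf{y}$, where the PPRS confidence score is exactly the Cohen confidence score evaluated at $\mathbf{y}$. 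The whole argument then reduces to showing that the perturbed input $\mathbf{y}' := A_{\mathcal{S}(\mathbf{x}+\boldsymbol{\delta})}(\mathbf{x}+\boldsymbol{\delta})$ stays inside this ball whenever $\Vert A_{\mathcal{S}(\mathbf{x})}\boldsymbol{\delta}\Vert_2 \le (1-\rho)\sigma C_{\mathrm{PPRS}(f^{\mathrm{GS}(\sigma)})}(\mathbf{x})$.

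First I would isolate the effective perturbation $\boldsymbol{\gamma} := \mathbf{y}' - \mathbf{y}$ and split it into the part a \emph{static} partition would produce plus a correction coming from the change in the averaging matrix:
\begin{equation*}
\boldsymbol{\gamma} = A_{\mathcal{S}(\mathbf{x})}\boldsymbol{\delta} + \bigl(A_{\mathcal{S}(\mathbf{x}+\boldsymbol{\delta})} - A_{\mathcal{S}(\mathbf{x})}\bigr)(\mathbf{x}+\boldsymbol{\delta}).
\end{equation*}
The first summand is precisely the quantity appearing in the static \cref{Cor: Cor 1 Static}, so after a triangle inequality the problem collapses to bounding the norm of the second, partition-drift summand. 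I would control this drift term through the assumed Lipschitz continuity $\Vert A_{\mathcal{S}(\mathbf{x}+\boldsymbol{\delta})} - A_{\mathcal{S}(\mathbf{x})}\Vert_2 \le \rho\Vert\boldsymbol{\delta}\Vert_2$, which by the operator-norm inequality gives $\Vert(A_{\mathcal{S}(\mathbf{x}+\boldsymbol{\delta})} - A_{\mathcal{S}(\mathbf{x})})(\mathbf{x}+\boldsymbol{\delta})\Vert_2 \le \rho\Vert\boldsymbol{\delta}\Vert_2\,\Vert\mathbf{x}+\boldsymbol{\delta}\Vert_2$. The aim is to absorb this into the $\rho\,\sigma C$ budget so that $\Vert\boldsymbol{\gamma}\Vert_2 \le (1-\rho)\sigma C + \rho\sigma C = \sigma C$, at which point \cref{Thm: Thm 1 Cohen} applied at $\mathbf{y}$ finishes the argument. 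As a sanity check, taking $\rho = 0$ makes the averaging matrix constant, kills the drift term, and recovers \cref{Cor: Cor 1 Static} exactly.

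The hard part will be controlling the partition-drift term $\Vert(A_{\mathcal{S}(\mathbf{x}+\boldsymbol{\delta})} - A_{\mathcal{S}(\mathbf{x})})(\mathbf{x}+\boldsymbol{\delta})\Vert_2$ tightly enough that it is genuinely bounded by $\rho\,\sigma C_{\mathrm{PPRS}(f^{\mathrm{GS}(\sigma)})}(\mathbf{x})$ rather than by the looser $\rho\Vert\boldsymbol{\delta}\Vert_2\Vert\mathbf{x}+\boldsymbol{\delta}\Vert_2$. The subtlety is that $A_{\mathcal{S}(\mathbf{x})}$ is an averaging (projection-type) operator with $\Vert A_{\mathcal{S}(\mathbf{x})}\Vert_2 \le 1$, so the hypothesis $\Vert A_{\mathcal{S}(\mathbf{x})}\boldsymbol{\delta}\Vert_2 \le (1-\rho)\sigma C$ does not by itself control $\Vert\boldsymbol{\delta}\Vert_2$ along directions the averaging collapses, nor the image magnitude $\Vert\mathbf{x}+\boldsymbol{\delta}\Vert_2$. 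Closing this gap is where I expect the normalization of the image domain (or a definition of $\rho$ that already folds in the image scale) to be required, and it is the step I would scrutinize most carefully; the remaining triangle-inequality bookkeeping feeding into \cref{Thm: Thm 1 Cohen} is then routine.
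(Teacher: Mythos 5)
Your reduction is, in structure, exactly the paper's proof: the paper also splits the displacement of the smoothed classifier's input into the static part $A_{\mathcal{S}(\mathbf{x})}\boldsymbol{\delta}$ plus a partition-drift part controlled by the Lipschitz hypothesis, and then feeds the total into the guarantee of \cref{Thm: Thm 1 Cohen} (via \cref{Cor: Cor 1 Static}). The only cosmetic differences are that the paper routes the comparison through the intermediate point $\mathbf{x}+A_{\mathcal{S}(\mathbf{x})}\boldsymbol{\delta}$ and proves an idempotence lemma $\mathbf{A}_{\mathbb{S}}^2=\mathbf{A}_{\mathbb{S}}$ to do so, arriving at the drift bound $2\rho\Vert\boldsymbol{\delta}\Vert^2+\rho\Vert\boldsymbol{\delta}\Vert\,\Vert\mathbf{x}\Vert$; your direct decomposition of $\mathbf{y}'-\mathbf{y}$ yields the marginally tighter $\rho\Vert\boldsymbol{\delta}\Vert^2+\rho\Vert\boldsymbol{\delta}\Vert\,\Vert\mathbf{x}\Vert$ with one fewer lemma and a single application of Cohen's theorem at $\mathbf{y}$.

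The gap you flag in your last paragraph is genuine, and you should know that the paper's own proof does not close it either --- it is not a routine step you happened to leave open. As you observe, $\Vert A_{\mathcal{S}(\mathbf{x})}\boldsymbol{\delta}\Vert_2$ does not control $\Vert\boldsymbol{\delta}\Vert_2$ (the averaging operator contracts, indeed collapses, many directions), and nothing in the hypotheses controls $\Vert\mathbf{x}\Vert_2$, so the drift term $\rho\Vert\boldsymbol{\delta}\Vert_2\Vert\mathbf{x}+\boldsymbol{\delta}\Vert_2$ cannot be absorbed into a budget of the form $\rho\,\sigma C_{\mathrm{PPRS}(f^{\mathrm{GS}(\sigma)})}(\mathbf{x})$. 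What the appendix actually does is inject, mid-proof, the additional assumption $\max\{\Vert\mathbf{x}\Vert,\Vert\boldsymbol{\delta}\Vert\}\le 1$ and then conclude certified robustness under the condition $(1+3\rho)\Vert\boldsymbol{\delta}\Vert\le\sigma C_{\mathrm{PPRS}(f^{\mathrm{GS}(\sigma)})}(\mathbf{x})$ --- a statement of a different form from the advertised $\Vert A_{\mathcal{S}(\mathbf{x})}\boldsymbol{\delta}\Vert_2\le(1-\rho)\sigma C_{\mathrm{PPRS}(f^{\mathrm{GS}(\sigma)})}(\mathbf{x})$, and one that still relies on exactly the normalization of the image domain you predicted would be needed. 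So your diagnosis of where the argument must lean on extra assumptions is precisely right: the theorem as literally stated is not established from the Lipschitz hypothesis alone, neither by your route nor by the paper's.
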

\begin{proof}
We defer the proof to the Appendix.
\end{proof}
Therefore, if the output of the partitioning scheme changes with a bounded rate across different inputs, we can still expect a similar gain in the certified accuracy of the PPRS approach. 

To dynamically select the partitions, we propose applying standard super-pixels which refer to the perceptual partitioning of pixels. The grouping in super-pixels can be performed by the clustering of pixels according to pixel characteristics such as intensity and color. In the literature, several established algorithms have been developed to assign the super-pixels such as Felzenszwalb's method~\cite{felzenszwalb2006efficient}, Quickshift~\cite{vedaldi2008quick}, and SLIC~\cite{achanta2012slic}. In practice, pixels within a super-pixel are typically relevant and belong to the same object. Hence, it is a reasonable assumption that the pixels in a super-pixel have a similar impact on the prediction rule's accuracy.    

\section{Numerical Results}
We numerically evaluated our proposed PPRS methodology on three standard image datasets: MNIST \cite{lecun-mnisthandwrittendigit-2010}, Fashion~MNIST \cite{xiao2017fashion},  CIFAR-10 \cite{cifar}, and ImageNet \cite{deng2009imagenet}.
In the experiments, we used standard pre-trained ResNet-18 models \cite{he2016deep} on the three datasets as the classification model. 
Also, for the ImageNet experiments, we considered a subset of ten labels to mitigate the computational and statistical costs of estimating the classifier's label probabilities. 

We compared the results of the proposed PPRS method with those of the standard randomized smoothing (RS) method in \cite{cohen2019certified}. As presented in Theorem \ref{Thm: Thm 1 Cohen}, we calculate the certification radius using 
    $R = \sigma C(\mathbf{x})$,
where $\sigma$ is the standard deviation of the Gaussian noise added to the image and $C(\mathbf{x})$ is the confidence score of the PPRS or RS approach. In the CIFAR-10 and ImageNet experiments, we respectively considered 250 and 1000 partitions per image assigned by the SLIC super-pixel \cite{achanta2012slic}, where on average every super-pixel contains 10.7 and 50 pixels. We defer the details of the numerical setting to the Appendix. 


\subsection{Numerical Comparison of PPRS vs. Baseline Randomized Smoothing Methods}
Figure~\ref{fig:imgs} visualizes the original and noisy versions of four randomly selected ImageNet samples in the experiments.  These examples suggest that the PPRS method could relatively increase the visual quality of the samples and, as a result, the confidence score of the neural net classifier. 
In addition, in the Appendix, we provide additional numerical results supporting that the proposed superpixel-based algorithm will be more effective under higher noise variances, since grouping of pixels helps to reduce the effective noise variance and improve input image's visibility, as shown in Figure~\ref{fig:imgs}. 

In addition, Figure \ref{fig:MainExp ImageNet} shows ImageNet-based and CIFAR-10-based comparisons between certified accuracy scores achieved by the vanilla randomized smoothing (RS) and PPRS methods for different standard deviation parameters of the additive Gaussian noise. These plots indicate that using the same noise standard deviation, the super-pixel-based randomized smoothing can achieve a higher certified accuracy at the same robustness radius, which could be attributed to the higher visibility of perturbed samples after the partition-based averaging in PPRS. 

\begin{figure}[h]
    \centering 
\begin{subfigure}{0.485\textwidth}
  \includegraphics[trim={0 0 0 0},clip ,width=\linewidth]{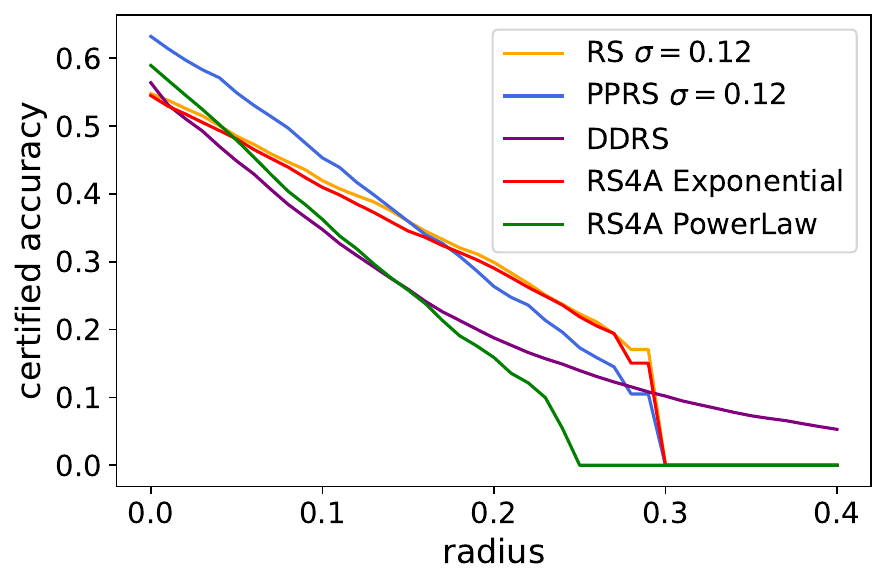}
  \caption{CIFAR-10 results}
  \label{fig:AllMethods Cifar}
\end{subfigure}\hfil 
\begin{subfigure}{0.485\textwidth}
  \includegraphics[trim={0 0 0 0},clip ,width=\linewidth]{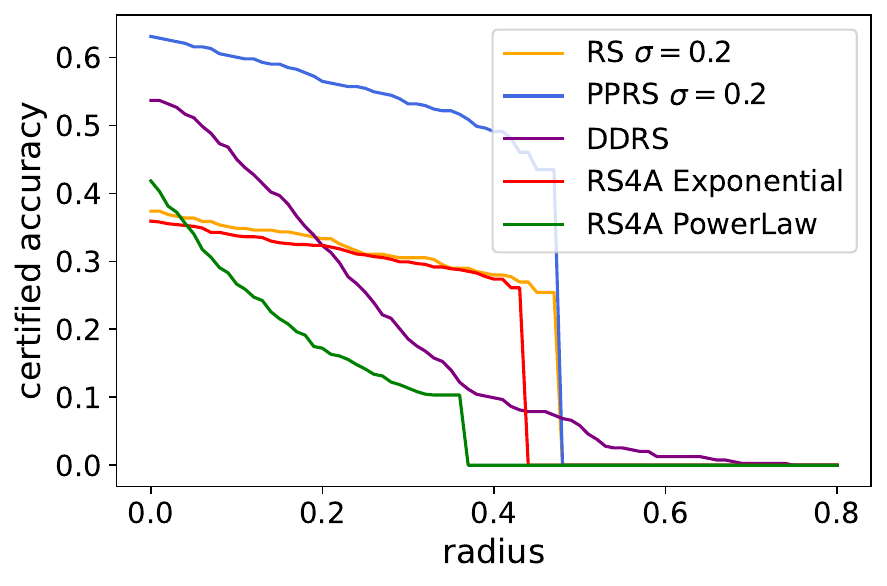}
  \caption{ImageNet results}
  \label{fig:AllMethods ImgNet}
\end{subfigure}\hfil 
\begin{subfigure}{0.485\textwidth}
  \includegraphics[trim={0 0 0 0},clip ,width=\linewidth]{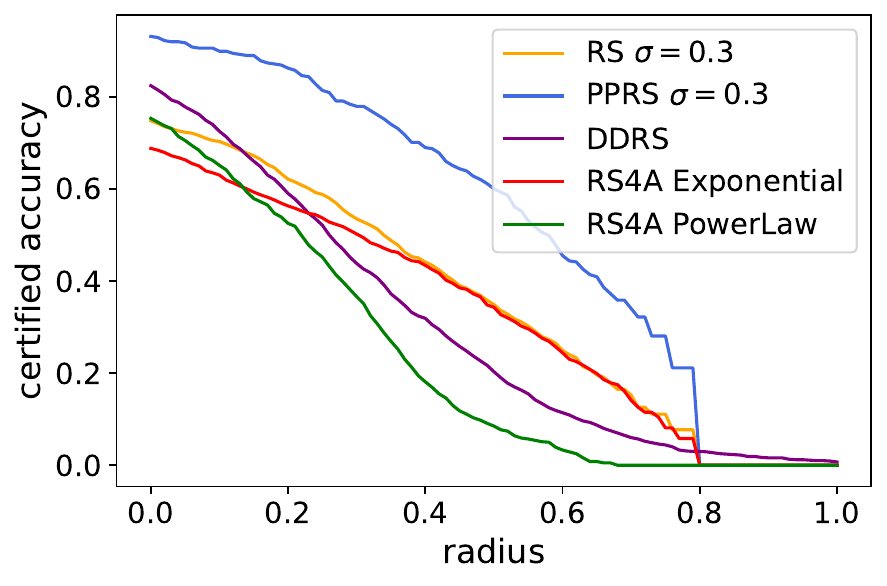}
  \caption{MNIST results}
  \label{fig:AllMethods Mnist}
\end{subfigure}\hfil 
\begin{subfigure}{0.485\textwidth}
  \includegraphics[trim={0 0 0 0},clip ,width=\linewidth]{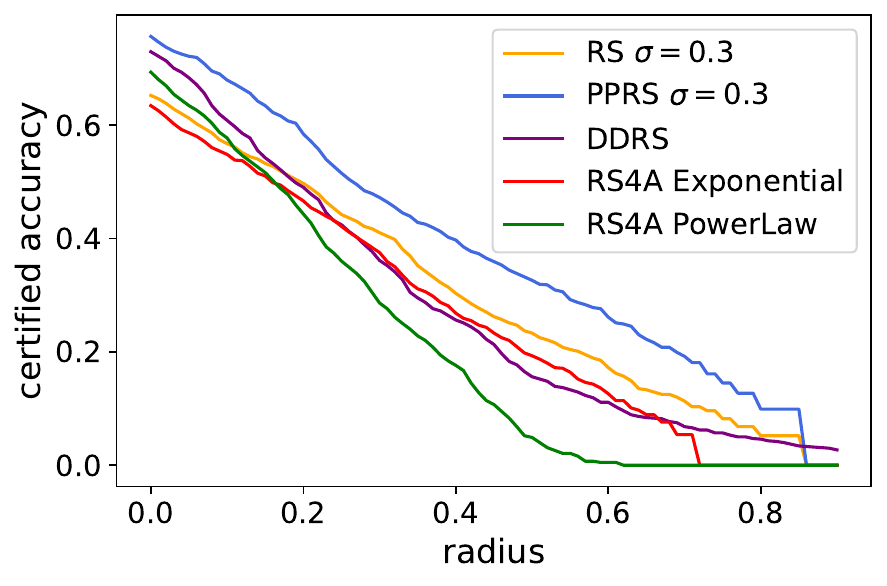}
  \caption{Fashion-MNIST results}
  \label{fig:AllMethods Mnist}
\end{subfigure}\hfil 

\caption{Numerical comparison of certifiably robust classification methods. The certified accuracy of the trained models is plotted vs. the certified prediction radius.}
\label{fig:allMethods}
\end{figure}

\subsection{Ablation Studies}

\paragraph{Super-pixel Schemes.}

To study the effect of choosing different super-pixel schemes, as shown in  Figure~\ref{fig: Ablation schemes pic}, we computed and visualized the super-pixels obtained by three different super-pixel schemes: Felzenszwalb’s method \cite{felzenszwalb2006efficient}, Quickshift \cite{vedaldi2008quick}, and SLIC \cite{achanta2012slic}. Note that the Quickshift method is a local mode-seeking algorithm that functions based on the color and location of the pixels. On the other hand, Felzenszwalb’s method is a graph-based segmentation algorithm. The empirical results in Figure~\ref{fig: Ablation schemes pic} indicate that the super-pixel-based partitioning followed by PPRS is flexible to the choice of the super-pixel scheme and can perform satisfactorily using different schemes. All the mentioned methods performed effectively compared to vanilla randomized smoothing, validating our hypothesis on the impacts of partitioning pixels on the robustness offered by randomized smoothing.

\begin{figure}
    \centering
    \begin{subfigure}{\textwidth}
    \includegraphics[trim={4cm 9cm 4cm 8cm},clip, scale=0.35]{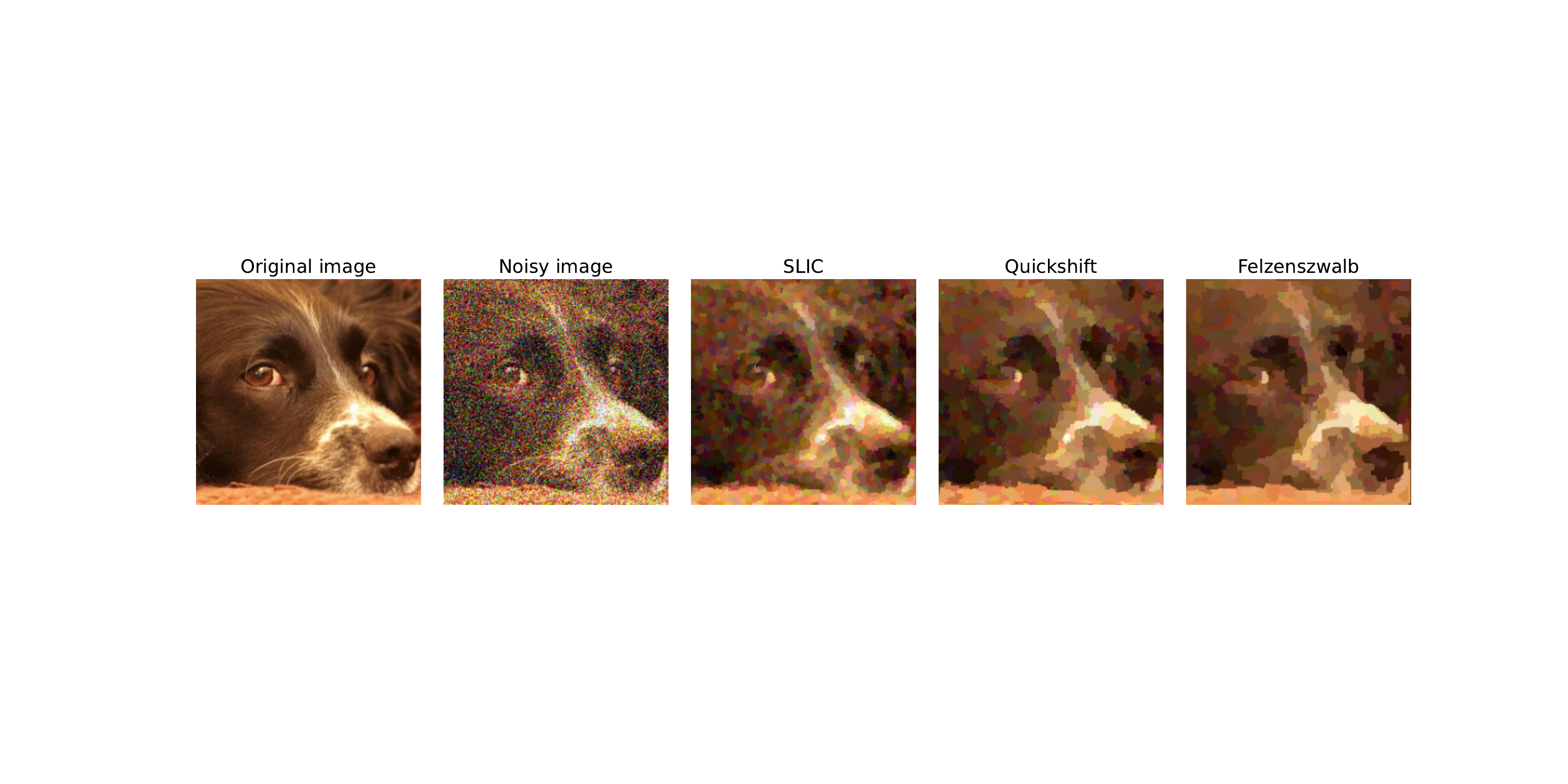}
    \end{subfigure} 
    \begin{subfigure}{\textwidth}
    \includegraphics[trim={4cm 9cm 4cm 9cm},clip, scale=0.35]{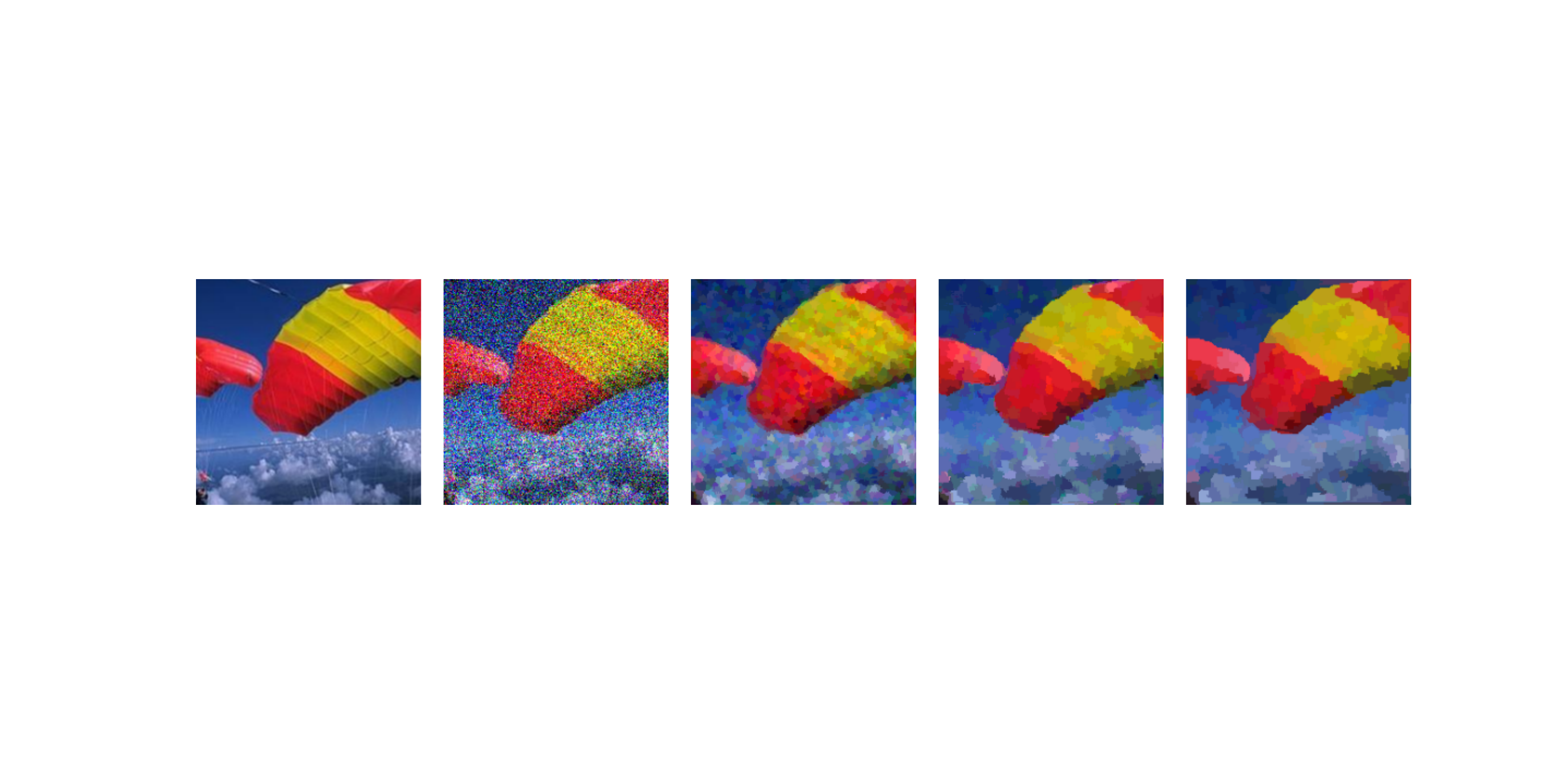}
    \end{subfigure} 
    \begin{subfigure}{\textwidth}
        \includegraphics[trim={4cm 9cm 4cm 9cm},clip, scale=0.35]{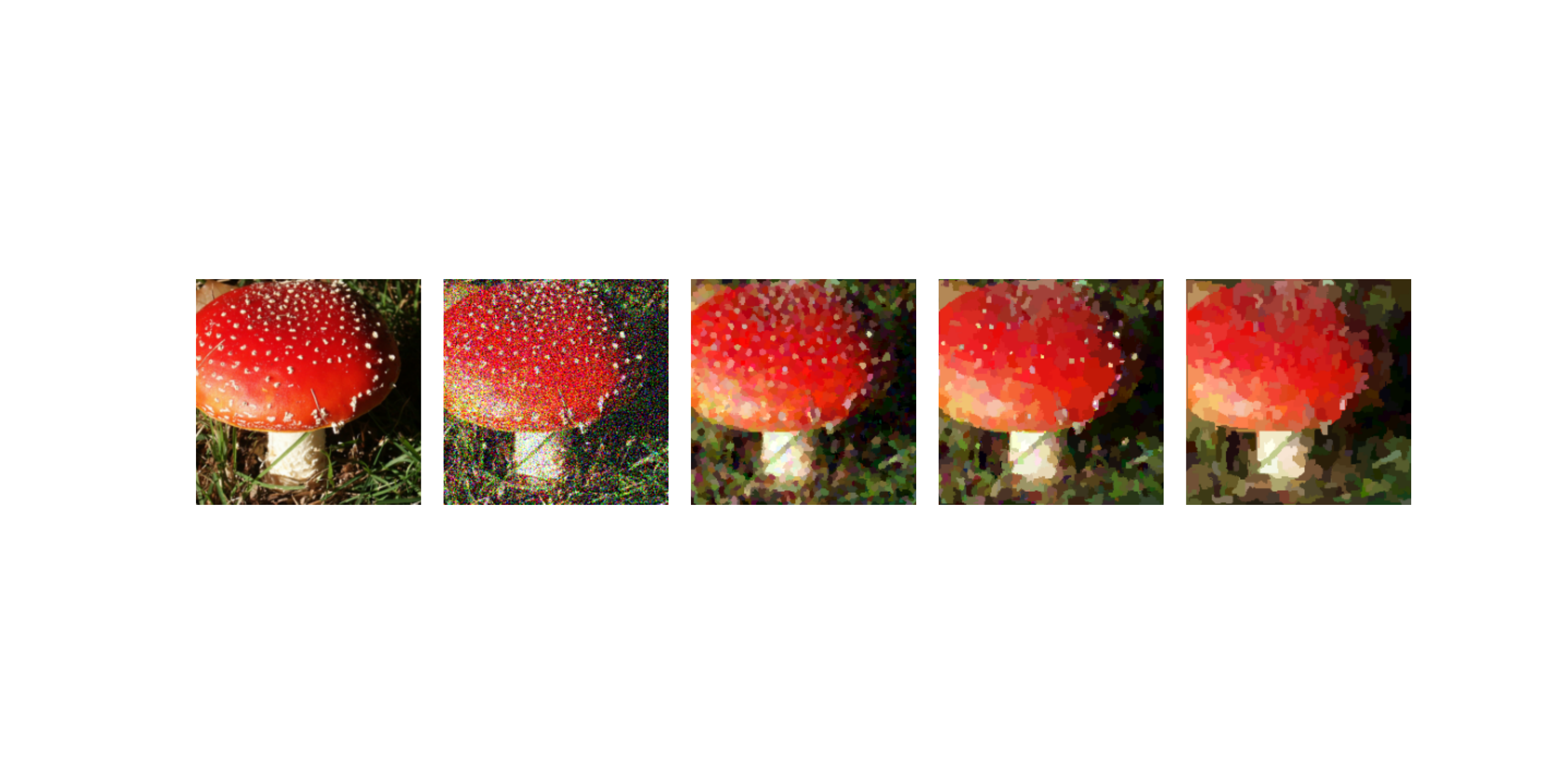}
    \end{subfigure} 
    \begin{subfigure}{\textwidth}
        \includegraphics[trim={4cm 9cm 4cm 9cm},clip, scale=0.35]{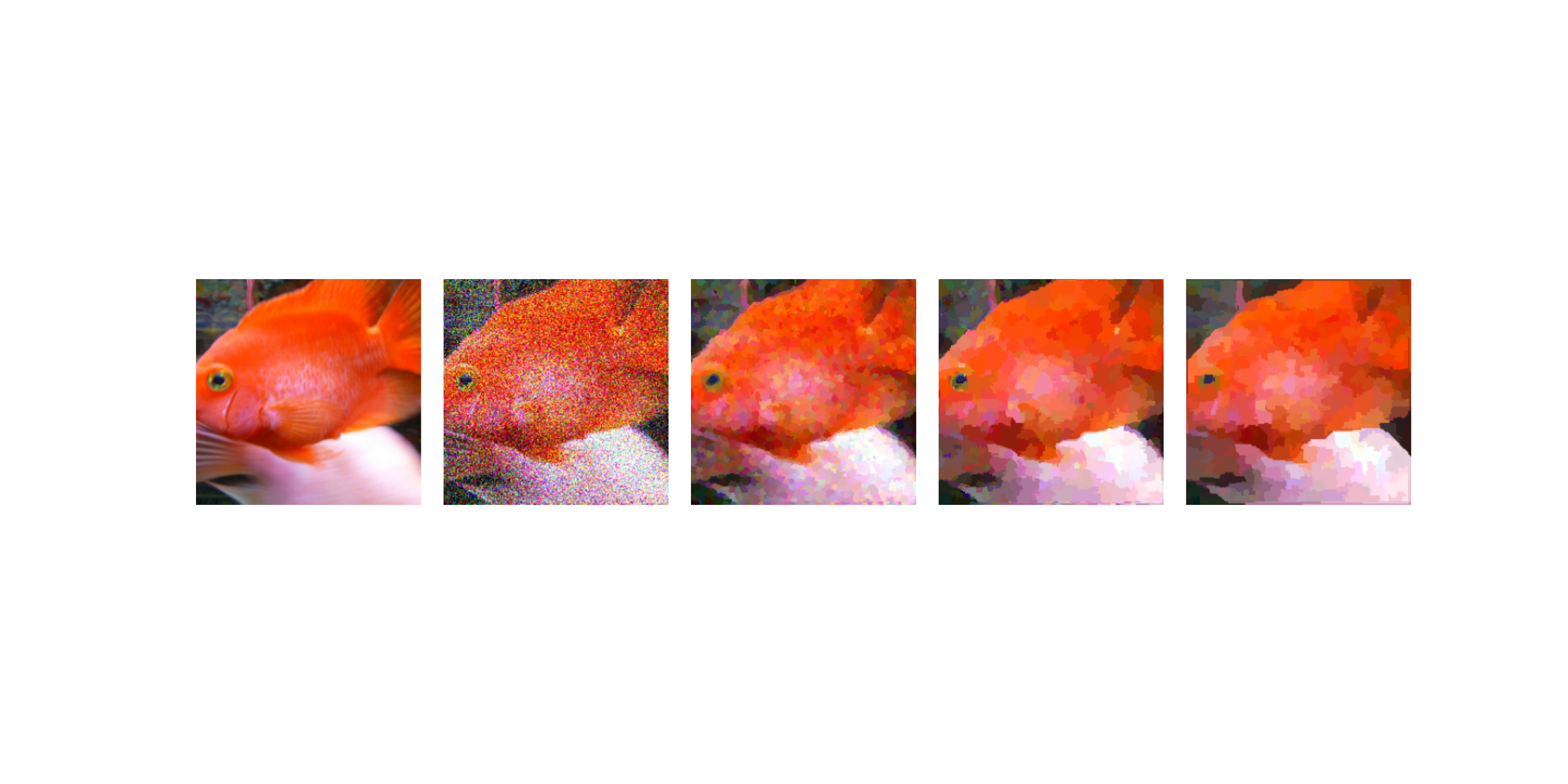}
    \end{subfigure} 

    \caption{  The effect of different super-pixel schemes on the quality of partition-based noisy images from ImageNet}
    \label{fig: Ablation schemes pic}
\end{figure}

\begin{table}[h]
\centering
\renewcommand{\arraystretch}{1.2}
\resizebox{0.99\columnwidth}{!}{%
\begin{tabular}{|c|c|c|c|c|}
\hline
Dataset & Method & Certified Accuracy & Certified F measure & Avg Superpixel Size \\ \hline
\multirow{4}{*}{ImageNet} & SLIC-PPRS & 68.7 & 66.8 & 1230 \\ \cline{2-5} 
 & QuickShift-PPRS & 72.2 & 71.55 & 925 \\ \cline{2-5} 
 & Felzenszwalb-PPRS & 64.3 & 77.93 & 1760 \\ \cline{2-5} 
 & Vanilla Randomized Smoothing & 58.52 & 65.77 & - \\ \hline
\multirow{4}{*}{CIFAR-10} & SLIC-PPRS & 51.9 & 53.4 & 231 \\ \cline{2-5} 
 & QuickShift-PPRS & 53.0 & 55.74 & 340 \\ \cline{2-5} 
 & Felzenszwalb-PPRS & 55.0 & 58.3 & 167 \\ \cline{2-5} 
 & Vanilla Randomized Smoothing & 35.1 & 32.3 & - \\ \hline
\multirow{4}{*}{MNIST} & SLIC-PPRS & 94.9 & 95.1 & 150 \\ \cline{2-5} 
 & QuickShift-PPRS & 98.2 & 98.5 & 77 \\ \cline{2-5} 
 & Felzenszwalb-PPRS & 97.7 & 97.9 & 66 \\ \cline{2-5} 
 & Vanilla Randomized Smoothing & 79.6 & 78.5 & - \\ \hline
\multirow{4}{*}{Fashion MNIST} & SLIC-PPRS & 80.0 & 79.4 & 150 \\ \cline{2-5} 
 & QuickShift-PPRS & 83.2 & 84.0 & 67 \\ \cline{2-5} 
 & Felzenszwalb-PPRS & 83.0 & 84.2 & 68 \\ \cline{2-5} 
 & Vanilla Randomized Smoothing & 73.2 & 74.2 & - \\ \hline
\end{tabular}%
}
\caption{Certified accuracy and F measure achieved by different super-pixel algorithms compared to vanilla randomized smoothing}
\label{tab:superpixel-results}
\end{table}

\begin{figure}
  \centering
    \includegraphics[trim={0cm 0cm 3cm 0cm}, width=0.8\linewidth]
    {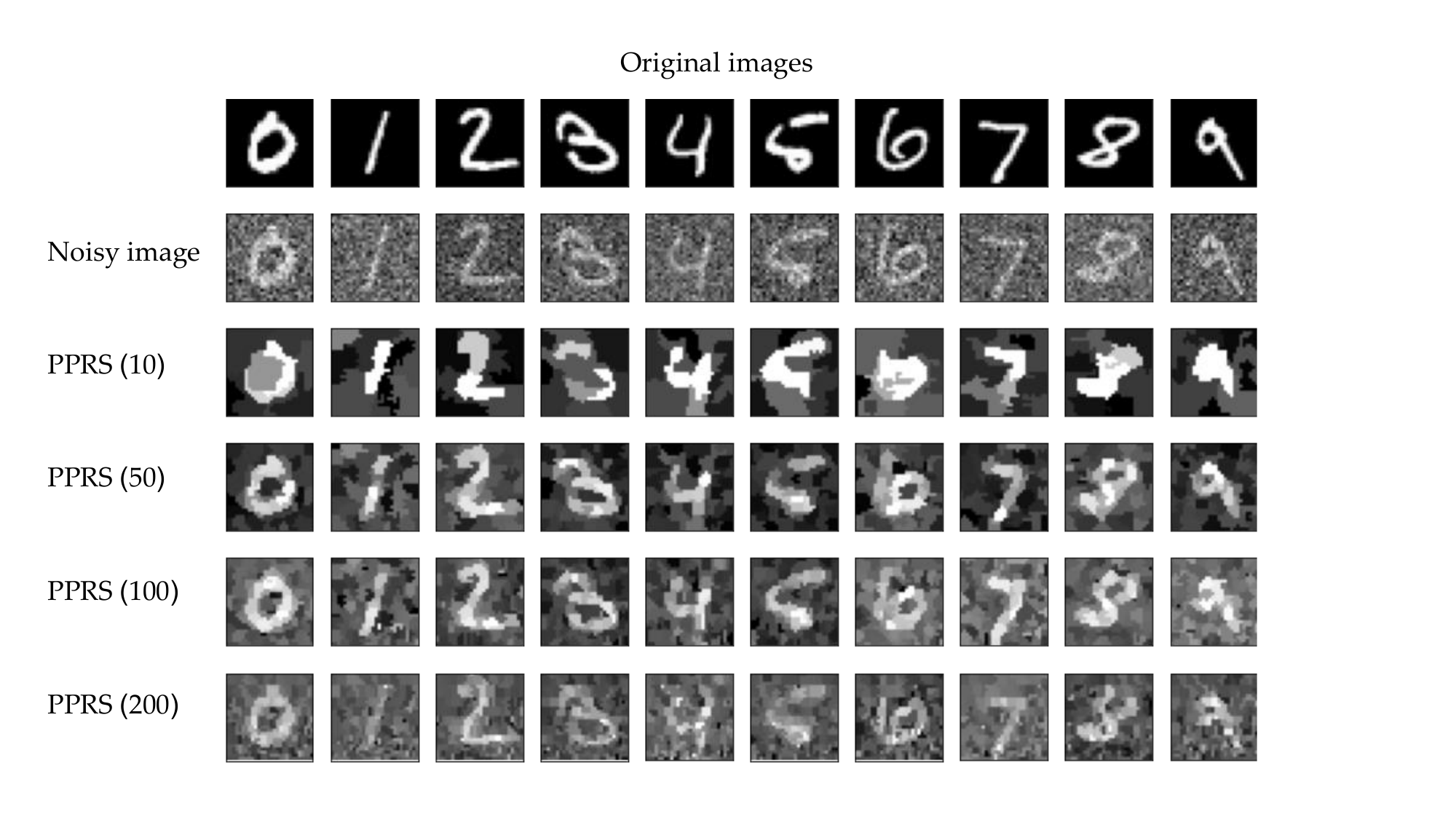}
  \caption{The effect of different numbers of super-pixel components on PPRS partition-based averaged MNIST samples.}
  \label{fig:ablation_mnist}
\end{figure}
\begin{figure}
  \centering
    \includegraphics[trim={0cm 0cm 3cm 0cm}, width=0.8\linewidth]
    {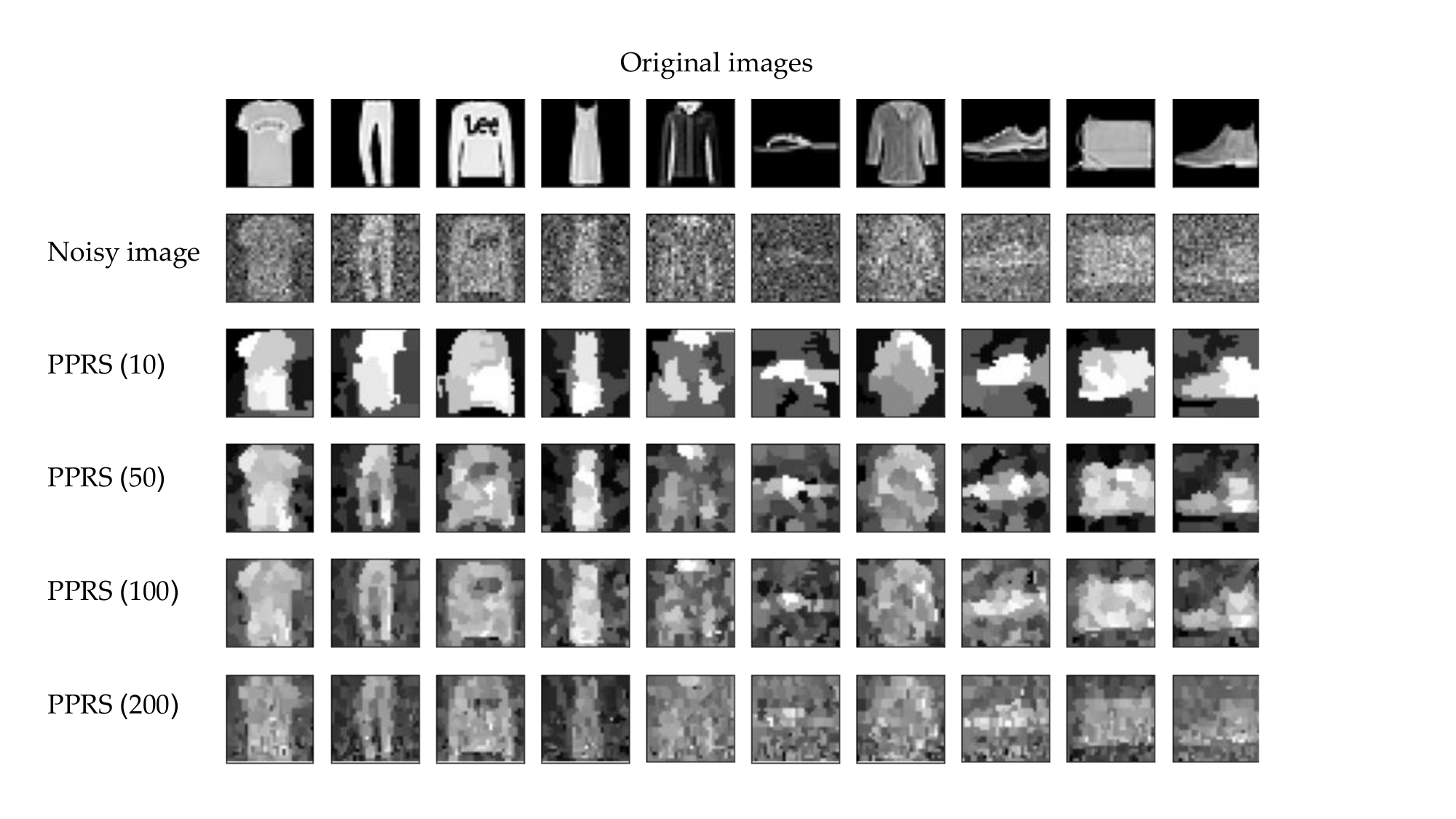}
  \caption{The effect of different numbers of super-pixel components on PPRS partition-based averaged Fashion-MNIST samples.}
  \label{fig:ablation_fashion_mnist}
\end{figure}

\paragraph{Super-pixel Size.}
We tested the effect of the number of super-pixel components on the PPRS results using the SLIC super-pixel method. As expected, when the number of components was selected to be significantly large, super-pixels were shrunk to nearly one pixel, and the super-pixel-based PPRS method was reduced to the vanilla randomized smoothing. On the other hand, when the number of super-pixels was too few, the partitions did not capture the details of the input image, leading to semantically less meaningful images. As shown in Figure \ref{fig: Ablation different sizes}, the number of components should be properly tuned for every dataset. Furthermore, Figures \ref{fig:ablation_mnist} and \ref{fig: Ablation ImgNet sizes pics} show PPRS-recovered versions of noisy MNIST,  Fashion~MNIST, and ImageNet samples using different numbers of super-pixel.


\begin{figure}
    \centering
    \includegraphics[trim={5cm 2.5cm 5cm 2.5cm},clip, scale=0.30]
    {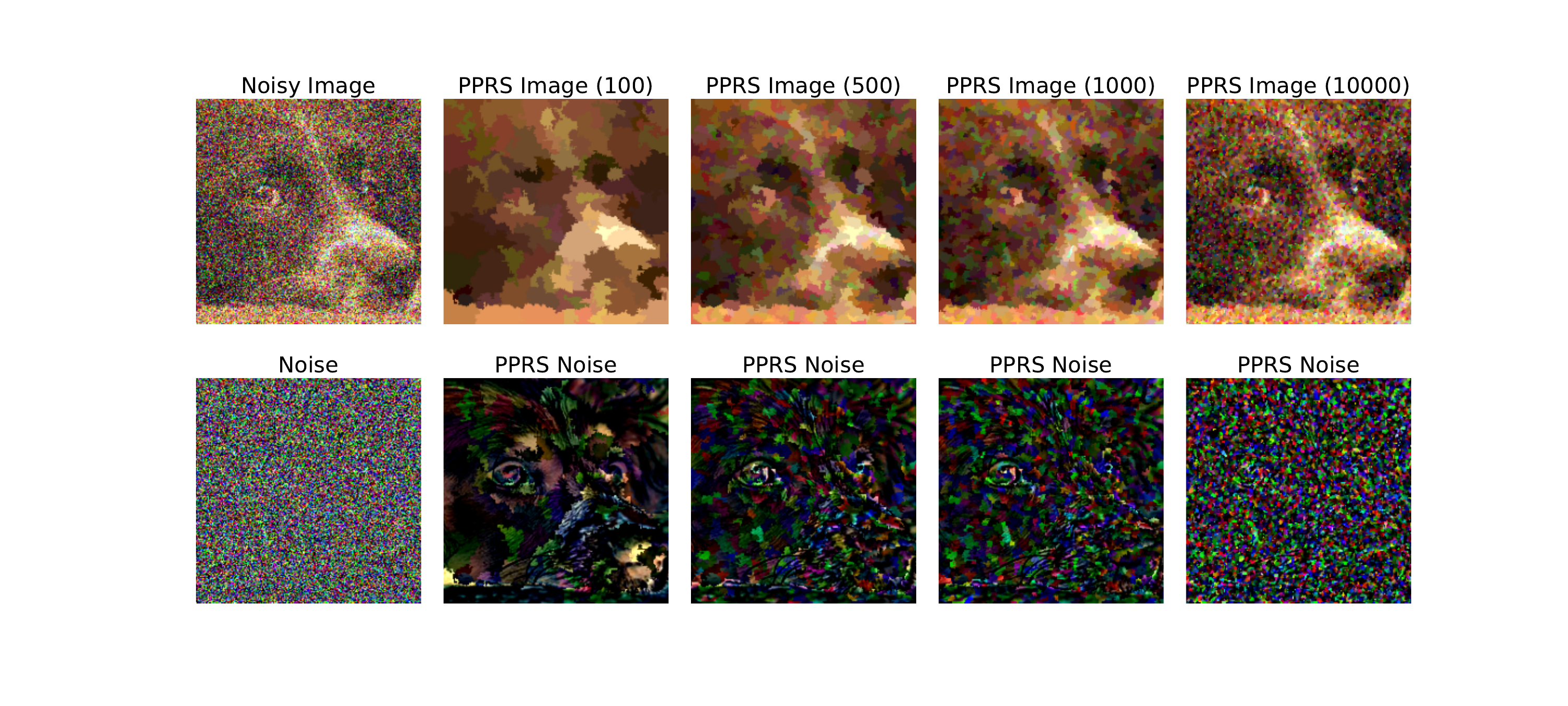}
    \includegraphics[trim={5cm 2.5cm 5cm 2.5cm},clip, scale=0.30]
    {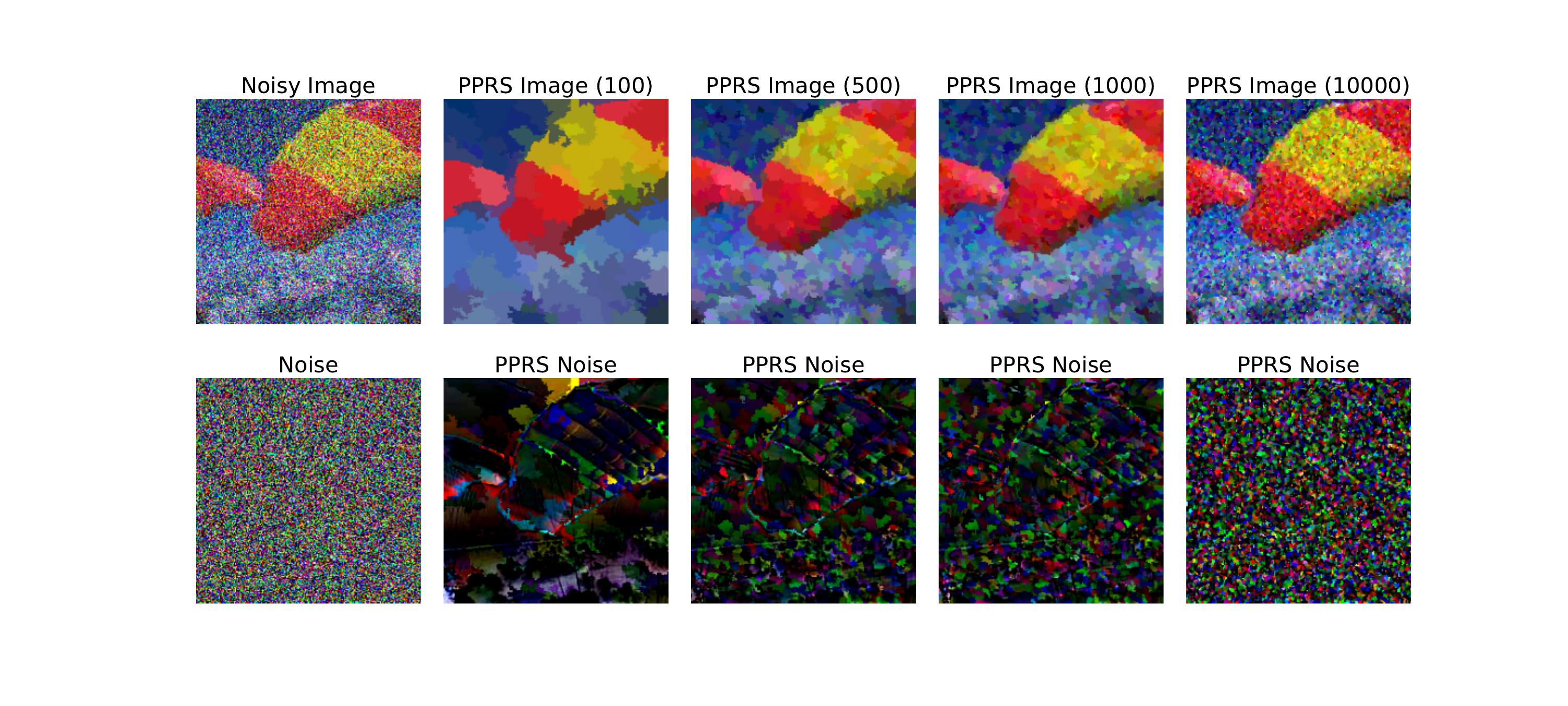}
    \caption{The top and bottom rows demonstrate pictures with additive Gaussian Noise with $\sigma=0.5$ and effect of different super-pixels}
    \label{fig: Ablation ImgNet sizes pics}
\end{figure}


\begin{figure}
    \centering 
\begin{subfigure}{0.33\textwidth}
  \includegraphics[width=\linewidth]{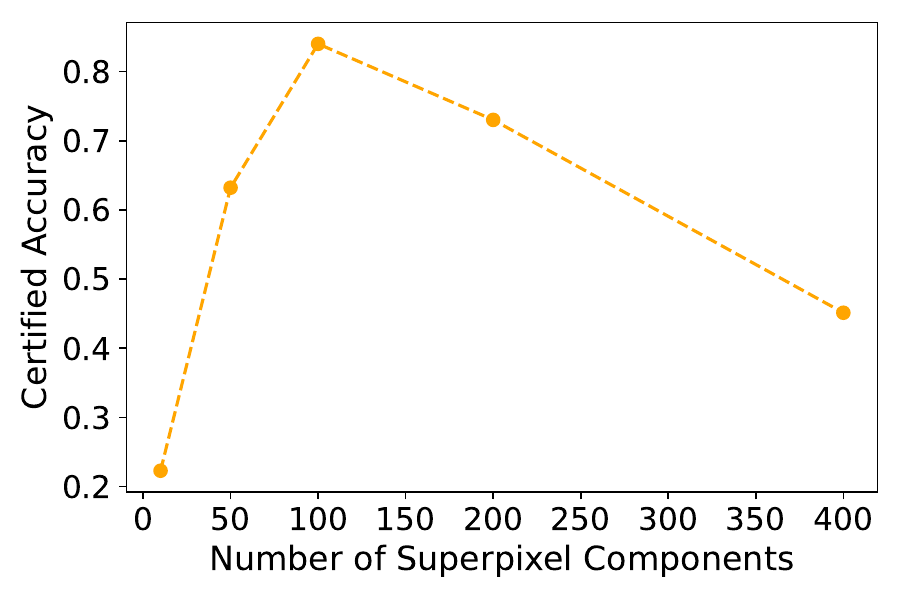}
  \caption{MNIST results}
  \label{fig: Ablation Mnist different sizes}
\end{subfigure}\hfil 
\begin{subfigure}{0.33\textwidth}
  \includegraphics[width=\linewidth]{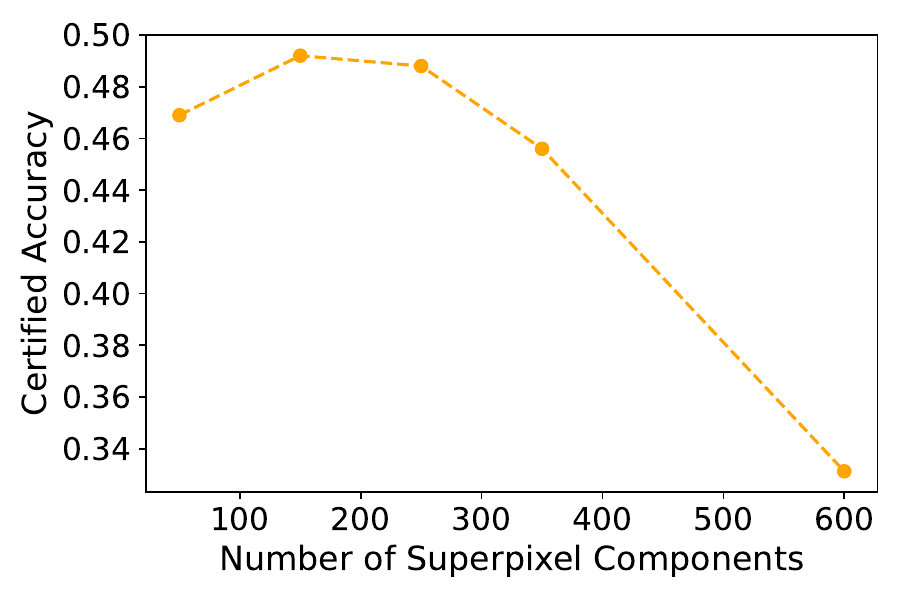}
  \caption{CIFAR-10 results}
  \label{fig: Ablation Cifar different sizes}
\end{subfigure}\hfil 
\begin{subfigure}{0.33\textwidth}
  \includegraphics[width=\linewidth]{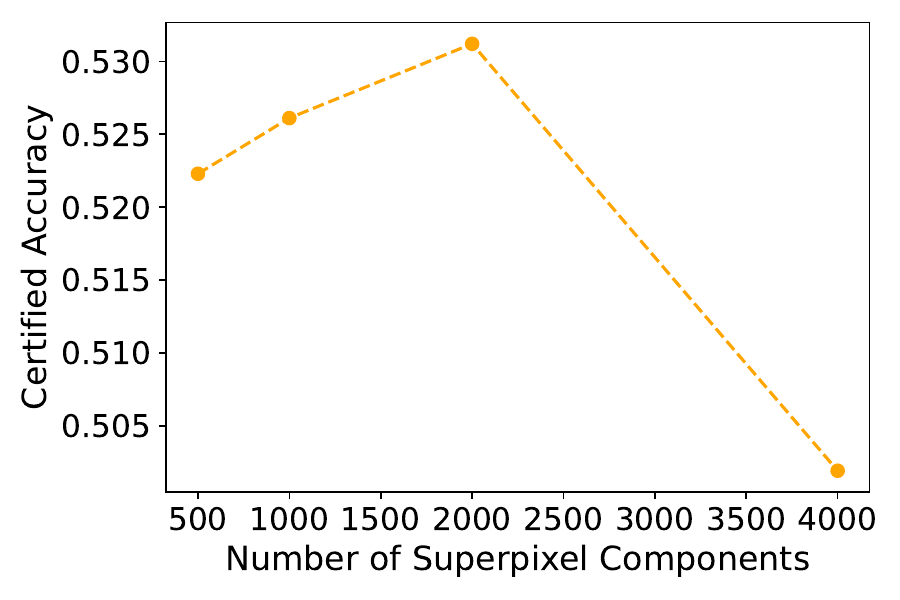}
  \caption{ImageNet results}
  \label{fig: Ablation ImgNet different sizes}
  
\end{subfigure}\hfil 
\caption{The effect of different number of super-pixel components in the certified accuracy obtained for different datasets}
\label{fig: Ablation different sizes}
\end{figure}

\paragraph{PPRS noise hyper-parameters.}

In our experiments, we tested the impact of noise hyperparameters and observed that the proposed super-pixel-based method is more effective under higher noise levels as shown in Figure~\ref{fig: ablation Sigma-Accuracy}. Performing computations with super-pixel variables naturally leads to some bias in the neural network's classification, while the noise reduction will become more effective. Therefore, by including higher levels of noise, the PPRS method performed more effectively until the noise level reached a certain threshold that significantly lowered the classifier's accuracy.


\begin{figure}[h!]
    \centering 
\begin{subfigure}{0.32\textwidth}
  \includegraphics[trim={0cm 0 0cm 0},clip ,width=\linewidth]{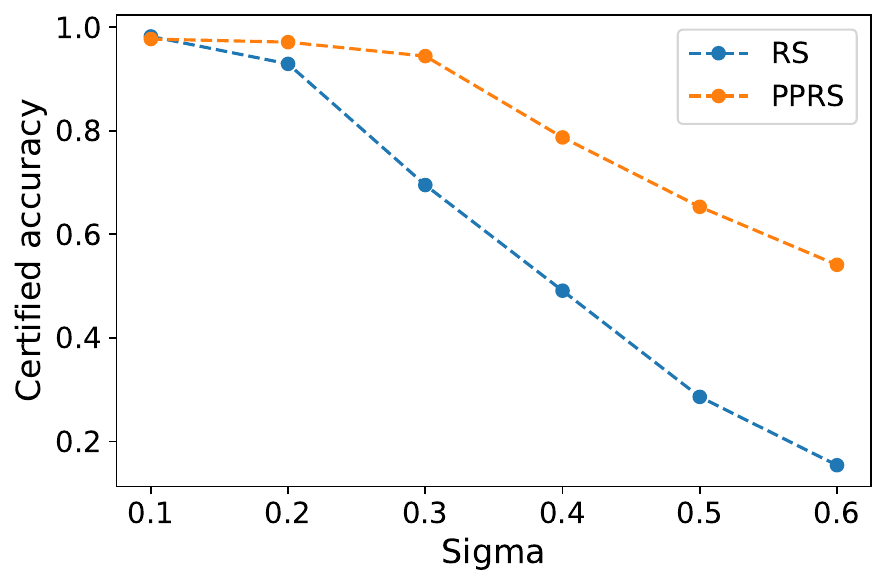}
  \caption{MNIST results}
  \label{fig: Ablation MNIST pic}
\end{subfigure}\hfil 
\begin{subfigure}{0.32\textwidth}
  \includegraphics[trim={0cm 0 0cm 0},clip ,width=\linewidth]{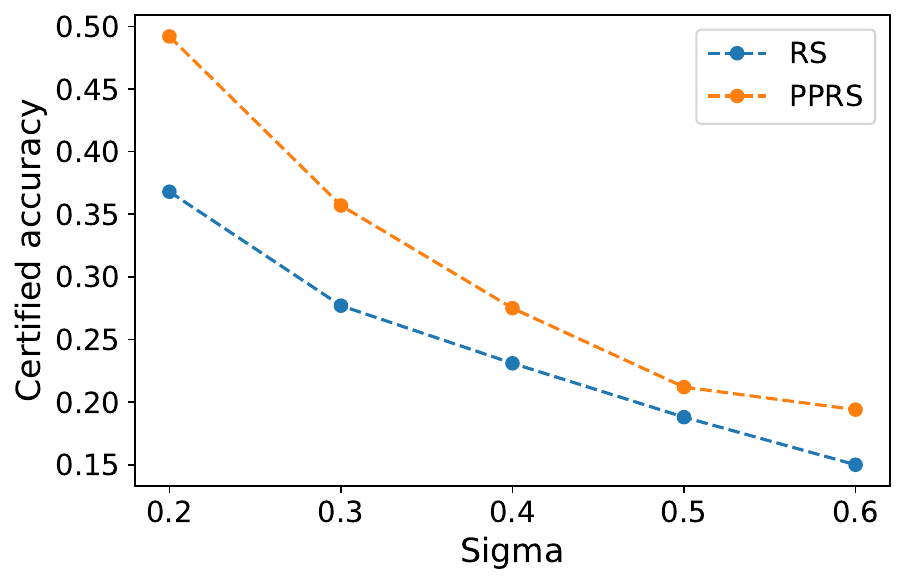}
  \caption{CIFAR-10 results}
  \label{fig: Ablation CIFAR pic}
\end{subfigure}\hfil 
\begin{subfigure}{0.32\textwidth}
  \includegraphics[trim={0cm 0 0cm 0},clip ,width=\linewidth]{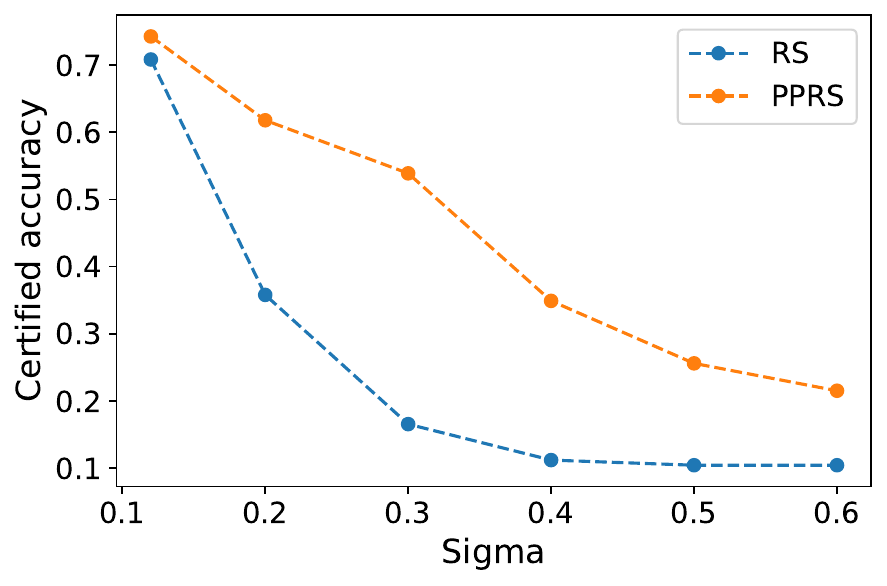}
  \caption{ImageNet results}
  \label{fig: Ablation ImgNet different sizes}
\end{subfigure}\hfil 
  \caption{The effect of different Gaussian noise levels on the resulting certified accuracy}
  \label{fig: ablation Sigma-Accuracy}
\end{figure}

\section{Conclusion}
In this work, we proposed a partitioning-based randomized smoothing algorithm to achieve higher certified robustness scores. Our method is based on grouping the input features into semantically meaningful partitions that can preserve the visibility of the input image. We discussed how such a partitioning scheme could result in a lower input dimension in the application of the Gaussian smoothing method, and proposed applying standard super-pixels to perform the partitioning-based randomized smoothing. Our numerical results suggest the improvements in certified accuracy achieved by the super-pixel-based approach. 
A relevant future direction is to extend the proposed framework to non-computer vision settings, e.g. to text and audio-related classification problems. Moreover, applying the PPRS approach to non-Gaussian randomized smoothing methods will be another interesting topic for future exploration. 

\subsection*{Limitations and Broader Impact}

As discussed in the paper, the proposed super-pixel-based robust classification method applies only to image classification problems where the input image can be clustered into a group of semantically meaningful pixel partitions. Therefore, the extension of the PPRS method to non-image data needs available clustering methods for the input features of the target dataset. In some applications involving complex data types, such a meaningful clustering may not be feasible, e.g. for the genetic datasets in computational biology settings. Therefore, we remark that the proposed method in this work focuses on computer vision settings, while extending the methodology to other domains will remain a future direction to this work.  


{
\bibliography{example_paper}
\bibliographystyle{unsrt}
}

\newpage
\appendix
\onecolumn
\section{Appendix}

\subsection{Proof of Theorem 2}
We begin by showing the following lemma.
\begin{lemma}
    Suppose that matrix $\mathbf{A}_{\mathbb{S}}$ denotes the linear transformation following the group averaging of partitions in $\mathbb{S}(\mathbf{x}):=\{S_1(\mathbf{x}), \cdots, S_p(\mathbf{x})\}$. Here, we define every $(i,j)$th entry of $\mathbf{A}_{\mathbb{S}} \in [0, 1]^{d\times p}$ as $\mathbf{A}_{i,j}=\frac{1}{|S_j|}$ if the $i$-th pixel belongs to $S_j$ with size $|S_j|$ and $\mathbf{A}_{i,j}=0$ otherwise. Then,
    \begin{equation*}
       \mathbf{A}^2_{\mathbb{S}}  = \mathbf{A}_{\mathbb{S}}
    \end{equation*}
\end{lemma}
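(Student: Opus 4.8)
The plan is to establish that $\mathbf{A}_{\mathbb{S}}$ is idempotent by exhibiting it as a block-diagonal projection whose blocks are rank-one averaging operators. Since the PPRS transformation $\mathbf{x}\mapsto\mathbf{A}_{\mathbb{S}}\mathbf{x}$ sends a $d$-dimensional image to the $d$-dimensional image of partition-wise averaged intensities, I would treat $\mathbf{A}_{\mathbb{S}}$ as the square operator acting on $\mathbb{R}^d$, so that $\mathbf{A}_{\mathbb{S}}^2$ is well defined; concretely, entry $(\mathbf{A}_{\mathbb{S}})_{i,k}$ equals $1/|S_j|$ whenever both pixels $i$ and $k$ lie in the same partition $S_j$, and is $0$ otherwise.

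First, I would reorder the pixel indices so that pixels belonging to the same partition are contiguous. Under this relabeling $\mathbf{A}_{\mathbb{S}}$ becomes block diagonal, $\mathbf{A}_{\mathbb{S}}=\mathrm{diag}(B_1,\dots,B_p)$, where the $j$-th block $B_j\in\mathbb{R}^{|S_j|\times|S_j|}$ is the uniform averaging matrix $B_j=\tfrac{1}{|S_j|}\mathbf{1}\mathbf{1}^{\top}$ with $\mathbf{1}$ the all-ones vector of length $|S_j|$. Because matrix multiplication respects block-diagonal structure, it suffices to verify $B_j^2=B_j$ for each block.

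Second, I would compute each block square directly: $B_j^2=\tfrac{1}{|S_j|^2}\mathbf{1}(\mathbf{1}^{\top}\mathbf{1})\mathbf{1}^{\top}=\tfrac{1}{|S_j|^2}\,|S_j|\,\mathbf{1}\mathbf{1}^{\top}=\tfrac{1}{|S_j|}\mathbf{1}\mathbf{1}^{\top}=B_j$, using $\mathbf{1}^{\top}\mathbf{1}=|S_j|$. Assembling the blocks then gives $\mathbf{A}_{\mathbb{S}}^2=\mathrm{diag}(B_1^2,\dots,B_p^2)=\mathrm{diag}(B_1,\dots,B_p)=\mathbf{A}_{\mathbb{S}}$, which is the claim. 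An equivalent route that avoids relabeling is the entrywise identity $(\mathbf{A}_{\mathbb{S}}^2)_{i,\ell}=\sum_k (\mathbf{A}_{\mathbb{S}})_{i,k}(\mathbf{A}_{\mathbb{S}})_{k,\ell}$: the only nonzero terms occur for $k$ in the common partition $S_j$ of pixel $i$, and the sum collapses to $|S_j|\cdot\tfrac{1}{|S_j|}\cdot\tfrac{1}{|S_j|}=\tfrac{1}{|S_j|}$ when $\ell$ shares that partition and to $0$ otherwise, exactly matching $(\mathbf{A}_{\mathbb{S}})_{i,\ell}$.

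Conceptually the statement says nothing more than that partition-wise averaging is a projection — averaging an already-averaged image leaves it unchanged — so I do not expect a genuine obstacle. The only point requiring care is the normalization bookkeeping: the cancellation of the factor $|S_j|$ against the two $1/|S_j|$ weights is precisely what makes the operator idempotent rather than merely rescaling, and I would make sure the index $j$ is consistently tied to the shared partition of the rows and columns being multiplied.
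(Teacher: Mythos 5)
Your proof is correct, and your closing ``equivalent route'' --- the entrywise expansion $(\mathbf{A}_{\mathbb{S}}^2)_{i,\ell}=\sum_k (\mathbf{A}_{\mathbb{S}})_{i,k}(\mathbf{A}_{\mathbb{S}})_{k,\ell}$, with the sum collapsing over the common partition to $|S_j|\cdot\frac{1}{|S_j|^2}=\frac{1}{|S_j|}$ --- is precisely the proof the paper gives. Your primary route differs only in packaging: you permute the pixel indices so that $\mathbf{A}_{\mathbb{S}}$ becomes $\mathrm{diag}(B_1,\dots,B_p)$ with $B_j=\frac{1}{|S_j|}\mathbf{1}\mathbf{1}^{\top}$, and reduce idempotence to the rank-one identity $B_j^2=\frac{1}{|S_j|^2}\mathbf{1}(\mathbf{1}^{\top}\mathbf{1})\mathbf{1}^{\top}=B_j$. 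That version buys a more transparent structural picture (partition-wise averaging is visibly a projection onto the subspace of partition-constant images), at the cost of the relabeling step; the paper's entrywise computation is shorter and needs no reordering. One point where your write-up is more careful than the source: the lemma as stated types $\mathbf{A}_{\mathbb{S}}$ as a $d\times p$ matrix, for which $\mathbf{A}_{\mathbb{S}}^2$ is not even defined when $p\neq d$. You explicitly re-interpret $\mathbf{A}_{\mathbb{S}}$ as the square $d\times d$ averaging operator, with entry $(i,k)$ equal to $1/|S_j|$ exactly when pixels $i$ and $k$ share the partition $S_j$; the paper's proof makes the same identification only implicitly, by summing $k$ from $1$ to $d$ and treating both matrix indices as pixels.
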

\begin{proof}
To show this theorem note that for every $i,j$
\begin{align*}
    \mathbf{A}^2_{\mathbb{S}}\Bigr\vert_{i,j}&= \sum_{k=1}^d \mathbf{A}_{\mathbb{S}}\bigr\vert_{i,k}
    \mathbf{A}_{\mathbb{S}}\bigr\vert_{k,j} \\ &= \sum_{k\in S(i) \cap S(j)} \mathbf{A}_{\mathbb{S}}\bigr\vert_{i,k}
    \mathbf{A}_{\mathbb{S}}\bigr\vert_{k,j} \\
    &=\begin{cases}
    0\quad &\text{\rm if}\; S_i\neq S_j \\
    \vert S_i\vert\frac{1}{\vert S_i\vert^2}=\frac{1}{\vert S_i\vert} \quad &\text{\rm if}\; S_i= S_j 
    \end{cases} \\
    &= \mathbf{A}_{\mathbb{S}}\Bigr\vert_{i,j}.
\end{align*}
The above completes the lemma's proof.
\end{proof}
Using the above lemma, we show that the map  $h_{\mathcal{S}}(\mathbf{x}) := A_{\mathcal{S}(\mathbf{x})}\mathbf{x}$ satisfies the following for every vectors $\mathbf{x},\boldsymbol{\delta}$
\begin{align*}
    &\Bigl\Vert h_{\mathcal{S}}(\mathbf{x}+\boldsymbol{\delta}) -  h_{\mathcal{S}}\bigl(\mathbf{x} + A_{\mathcal{S}(\mathbf{x})}\boldsymbol{\delta} \bigr)\Bigr\Vert \\
    =\: &\Bigl\Vert A_{\mathcal{S}(\mathbf{x}+\boldsymbol{\delta})}\bigl(\mathbf{x}+\boldsymbol{\delta}\bigr) -  A_{\mathcal{S}(\mathbf{x} + A_{\mathcal{S}(\mathbf{x})}\boldsymbol{\delta} )}\bigl(\mathbf{x} + A_{\mathcal{S}(\mathbf{x})}\boldsymbol{\delta} \bigr)\Bigr\Vert \\
   =\: &\Bigl\Vert A_{\mathcal{S}(\mathbf{x}+\boldsymbol{\delta})}\bigl(\mathbf{x}+A_{\mathcal{S}(\mathbf{x}+\boldsymbol{\delta})}\boldsymbol{\delta}\bigr) -  A_{\mathcal{S}(\mathbf{x} + A_{\mathcal{S}(\mathbf{x})}\boldsymbol{\delta} )}\bigl(\mathbf{x} + A_{\mathcal{S}(\mathbf{x})}\boldsymbol{\delta} \bigr)\Bigr\Vert \\ 
   \le \: &  \Bigl\Vert A_{\mathcal{S}(\mathbf{x}+\boldsymbol{\delta})}\bigl(\mathbf{x}+A_{\mathcal{S}(\mathbf{x}+\boldsymbol{\delta})}\boldsymbol{\delta}\bigr) -  A_{\mathcal{S}(\mathbf{x} + A_{\mathcal{S}(\mathbf{x})}\boldsymbol{\delta} )}\bigl(\mathbf{x} + A_{\mathcal{S}(\mathbf{x}+\boldsymbol{\delta})}\boldsymbol{\delta} \bigr)\Bigr\Vert   \\
   &\quad +       \Bigl\Vert A_{\mathcal{S}(\mathbf{x} + A_{\mathcal{S}(\mathbf{x})}\boldsymbol{\delta} )}\bigl(\mathbf{x} + A_{\mathcal{S}(\mathbf{x}+\boldsymbol{\delta})}\boldsymbol{\delta} \bigr) -  A_{\mathcal{S}(\mathbf{x} + A_{\mathcal{S}(\mathbf{x})}\boldsymbol{\delta} )}\bigl(\mathbf{x} + A_{\mathcal{S}(\mathbf{x})}\boldsymbol{\delta} \bigr)\Bigr\Vert \\
   = \: &  \Bigl\Vert \Bigl(A_{\mathcal{S}(\mathbf{x}+\boldsymbol{\delta})} - A_{\mathcal{S}(\mathbf{x} + A_{\mathcal{S}(\mathbf{x})}\boldsymbol{\delta} )} \Bigr)\bigl(\mathbf{x}+A_{\mathcal{S}(\mathbf{x}+\boldsymbol{\delta})}\boldsymbol{\delta}\bigr) \Bigr\Vert     +       \Bigl\Vert A_{\mathcal{S}(\mathbf{x} + A_{\mathcal{S}(\mathbf{x})}\boldsymbol{\delta} )}\bigl( A_{\mathcal{S}(\mathbf{x}+\boldsymbol{\delta})} - A_{\mathcal{S}(\mathbf{x})}\bigr)\boldsymbol{\delta}    \Bigr\Vert \\
   \le \: &  \rho \Vert I - A_{\mathcal{S}(\mathbf{x})} \Vert \Vert \boldsymbol{\delta}\Vert \bigl( \Vert \mathbf{x}\Vert + \Vert \boldsymbol{\delta}\Vert\bigr) + \rho\Vert A_{\mathcal{S}(\mathbf{x} + A_{\mathcal{S}(\mathbf{x})}\boldsymbol{\delta} )}\Vert  \Vert \boldsymbol{\delta}\Vert^2  \\
   \le \:& 2\rho\Vert\boldsymbol{\delta}\Vert^2 + \rho\Vert\boldsymbol{\delta}\Vert\Vert\mathbf{x}\Vert.
\end{align*}
Therefore, given every $\mathbf{x}$, we consider the certified robustness guarantee of Corollary 1 for the fixed partitioning matrix $A_{\mathcal{S}(\mathbf{x})}$, which shows the labeling assigned by the Gaussian smoothed version of $h_{S(\mathbf{x})}(\mathbf{x}+A_{\mathcal{S}(\mathbf{x})}\boldsymbol{\delta})$ remains unchanged if $\Vert A_{\mathcal{S}(\mathbf{x})}\boldsymbol{\delta}\Vert \le \sigma C_{\mathrm{PPRS}(f^{\mathrm{GS}(\sigma)})}(\mathbf{x})$. On the other hand, Theorem 1 from \cite{cohen2019certified} together with the above bound imply that under a perturbation $\boldsymbol{\delta}$ satisfying $2\rho\Vert\boldsymbol{\delta}\Vert^2 + \rho\Vert\boldsymbol{\delta}\Vert\Vert\mathbf{x}\Vert \le \sigma C_{\mathrm{PPRS}(f^{\mathrm{GS}(\sigma)})}(\mathbf{x})$ the labels assigned by $\mathrm{f}^{PPRS}$ to $\mathbf{x}+\boldsymbol{\delta}$ and $\mathbf{x}+A_S(\mathbf{x})\boldsymbol{\delta}$ will be the same. Therefore, assuming that $\Vert\boldsymbol{\delta}\Vert + 2\rho\Vert\boldsymbol{\delta}\Vert^2 + \rho\Vert\boldsymbol{\delta}\Vert\Vert\mathbf{x}\Vert\le \sigma C_{\mathrm{PPRS}(f^{\mathrm{GS}(\sigma)})}(\mathbf{x})$, the certified robustness will hold, which under the assumption that $\max\{\Vert\mathbf{x}\Vert, \Vert\boldsymbol{\delta}\Vert\}\le 1$ will imply that if $(1+3\rho)\Vert\boldsymbol{\delta}\Vert \le \sigma C_{\mathrm{PPRS}(f^{\mathrm{GS}(\sigma)})}(\mathbf{x})$, the certified robustness will hold and the prediction for $\mathbf{x}$ and $\mathbf{x}+\boldsymbol{\delta}$ will be identical.

\subsection{Additional Numerical Results and Visualizatons}
To further investigate our method, we will also compare it with Yang et~al. \cite{pmlr-v119-yang20c} proposed framework for certifying a robust radius for a given classifier. They introduce the notion of Wulff Crystal and spherical level sets and propose a general framework that can use different distribution noises and find a correspondent robust radii. Because our work focuses on $L_2$~norm we choose Exponential, PowerLaw, and Gaussian distributions and follow Yang et~al.  method. We compare these baselines using different $\sigma$'s in Fig.~\ref{AppendixAllDistImgNet} and Fig.~\ref{AppendixAllDistMnist}. 

\begin{itemize}
    \item Exponential $\propto e^-{||\frac{x}{\lambda}||_2}$ 
    \item Gaussian $\propto e^{-||\frac{x}{\lambda}||_2^2}$ 
    \item Power Law $\propto (1+||\frac{x}{\lambda}||_2)^{-a}$ 
\end{itemize}

Fig.~\ref{fig: Appendix SampleImgs} demonstrates more samples and their respective Certified Radius when using PPRS vs RS (100 samples for estimating the class and 1000 samples for estimating the radius using \eqref{Thm: Thm 1 Cohen}). Also here we chose Resnet 101 as the base classifier. As shown in the figures, we believe the Superpixel Algorithm is more effective in Images with simpler geometry where similar clusters of pixels are more apparent.

\begin{figure*}[h!]
    \centering 
\begin{subfigure}{0.485\textwidth}
  \includegraphics[trim={0 0 0 0},clip ,width=\linewidth]{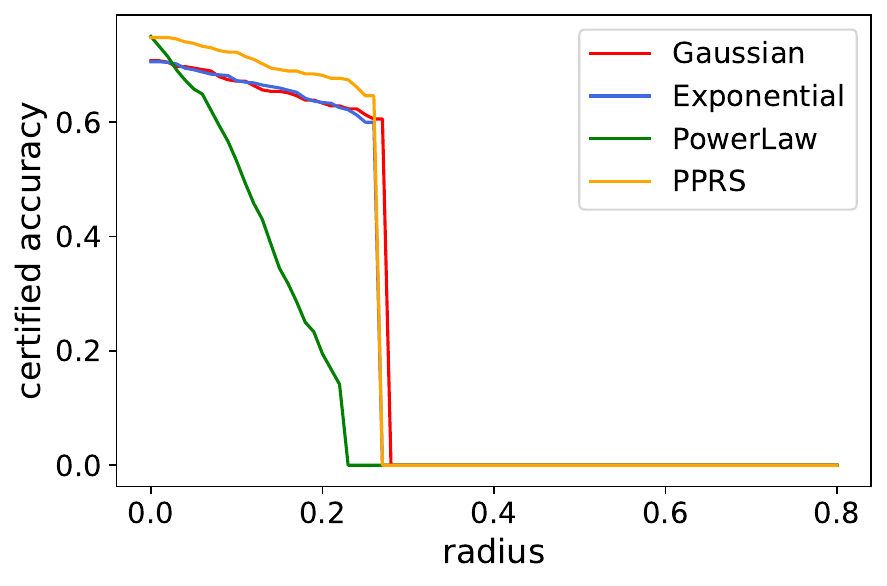}
    \caption{ImageNet $\sigma=0.12$}
  \label{fig:AppendexDists12Img}
\end{subfigure}\hfil 
\begin{subfigure}{0.485\textwidth}
  \includegraphics[trim={0 0 0 0},clip ,width=\linewidth]{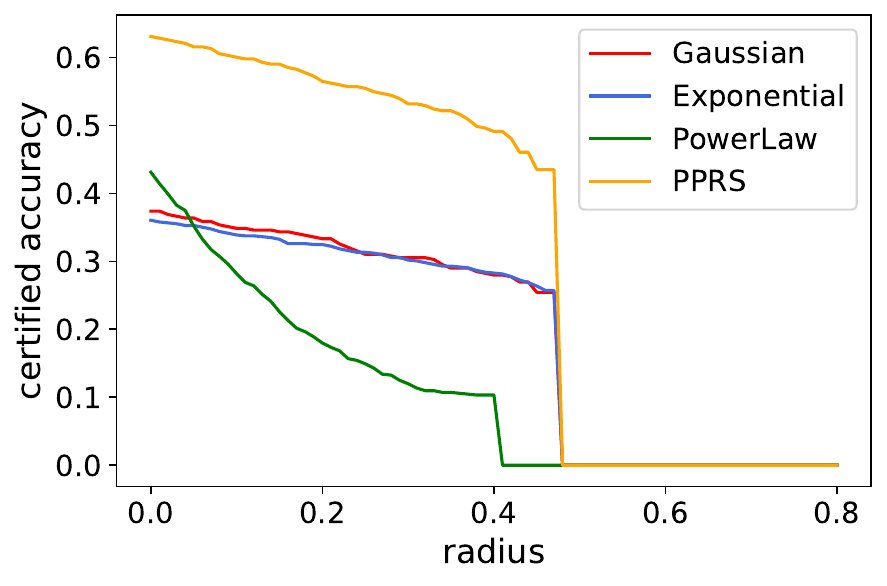}
    \caption{ImageNet $\sigma=0.2$}
  \label{fig:AppendexDists20Img}
\end{subfigure}\hfil 
\begin{subfigure}{0.485\textwidth}
  \includegraphics[trim={0 0 0 0},clip ,width=\linewidth]{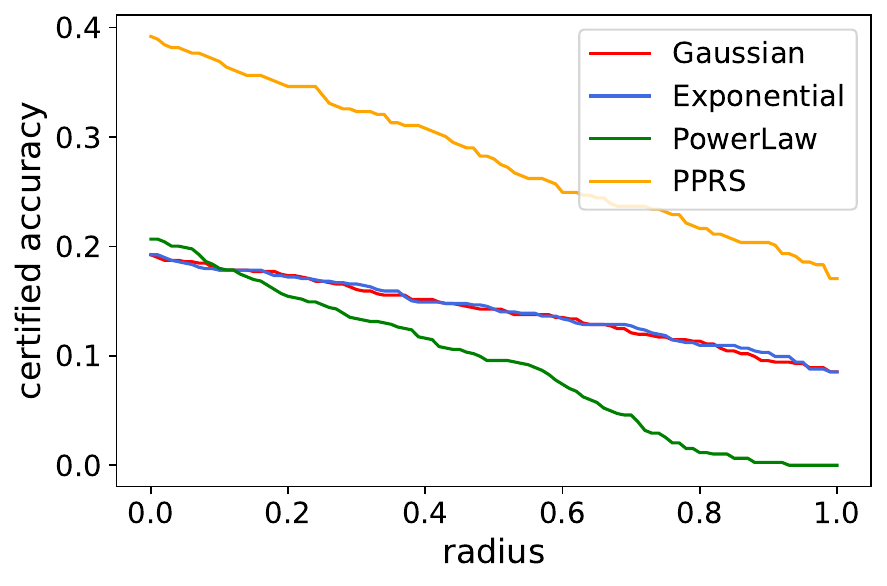}
  \caption{ImageNet $\sigma=0.5$}
  \label{fig:AppendexDists50Img}
\end{subfigure} 

\caption{Comparision of our method with Yang et~al. framework for randomized smoothing when using a $L_2$ adversary on ImageNet}
\label{AppendixAllDistImgNet}
\end{figure*}

\begin{figure*}[h]
    \centering 
\begin{subfigure}{0.485\textwidth}
  \includegraphics[trim={0 0 0 0},clip ,width=\linewidth]{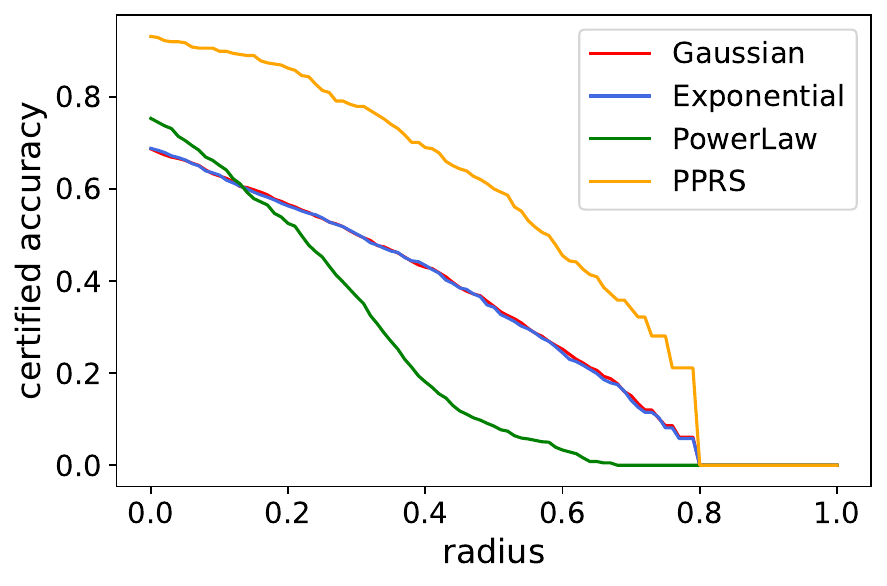}
    \caption{MNIST $\sigma=0.3$}
  \label{fig:AppendexDists30Mnist}
\end{subfigure}\hfil 
\begin{subfigure}{0.485\textwidth}
  \includegraphics[trim={0 0 0 0},clip ,width=\linewidth]{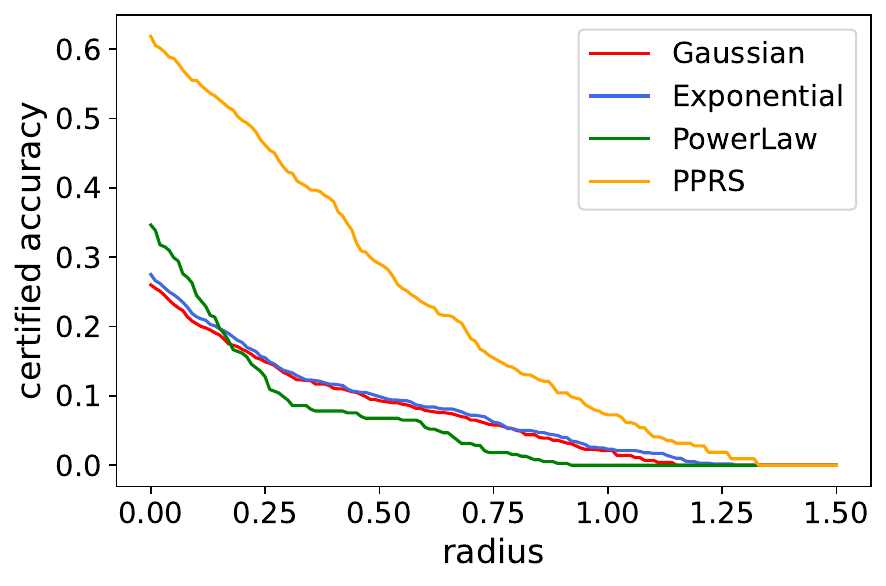}
    \caption{MNIST $\sigma=0.5$}
  \label{fig:AppendexDists50Mnist}
\end{subfigure}\hfil 
\begin{subfigure}{0.485\textwidth}
  \includegraphics[trim={0 0 0 0},clip ,width=\linewidth]{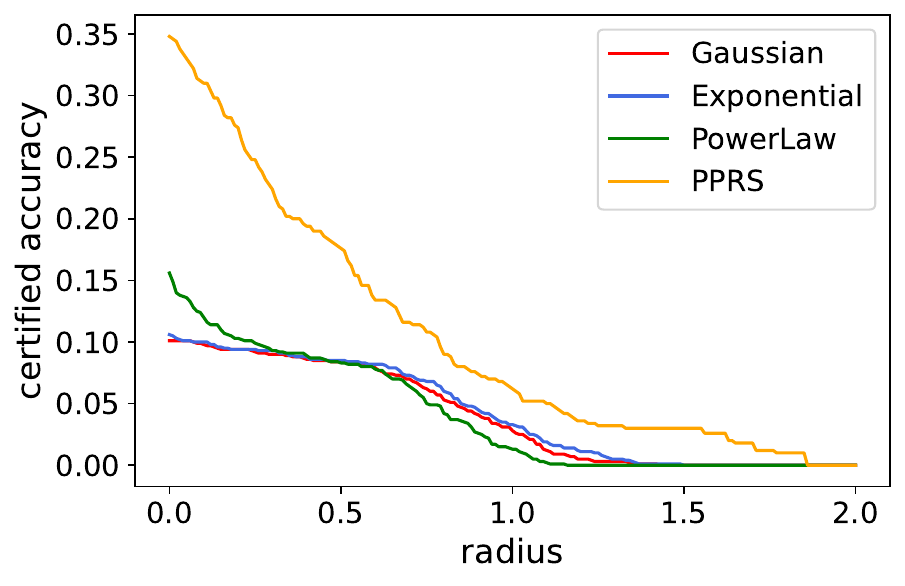}
  \caption{MNIST $\sigma=0.7$}
  \label{fig:AppendexDists70Mnist}
\end{subfigure}
\begin{subfigure}{0.485\textwidth}
  \includegraphics[trim={0 0 0 0},clip ,width=\linewidth]{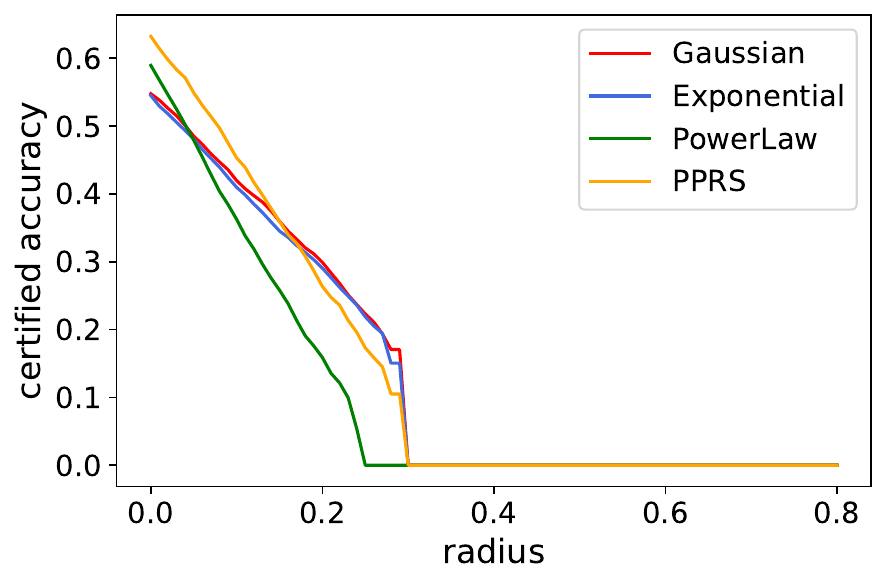}
    \caption{CIFAR-10 $\sigma=0.12$}
  \label{fig:AppendexDists12Cifar}
\end{subfigure}\hfil 
\begin{subfigure}{0.485\textwidth}
  \includegraphics[trim={0 0 0 0},clip ,width=\linewidth]{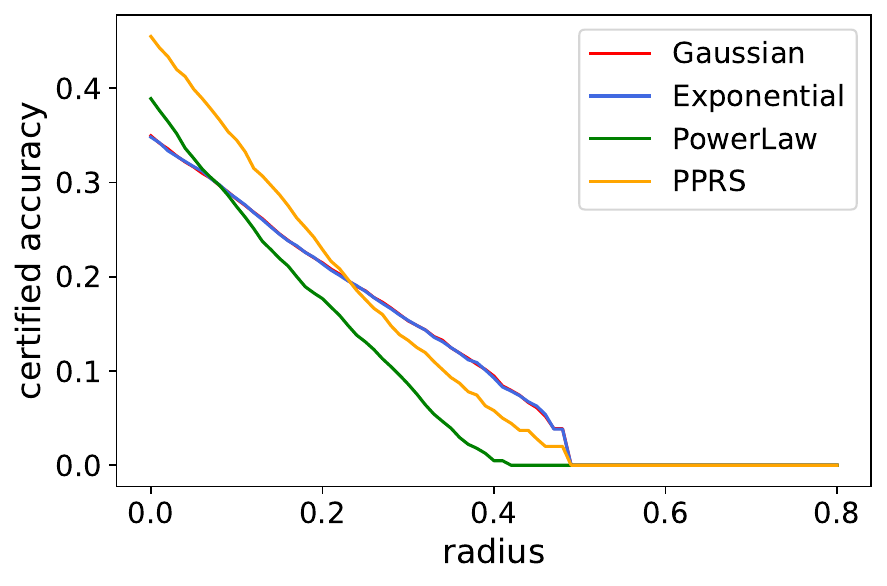}
    \caption{CIFAR-10 $\sigma=0.2$}
  \label{fig:AppendexDists20Cifar}
\end{subfigure}\hfil 
\begin{subfigure}{0.485\textwidth}
  \includegraphics[trim={0 0 0 0},clip ,width=\linewidth]{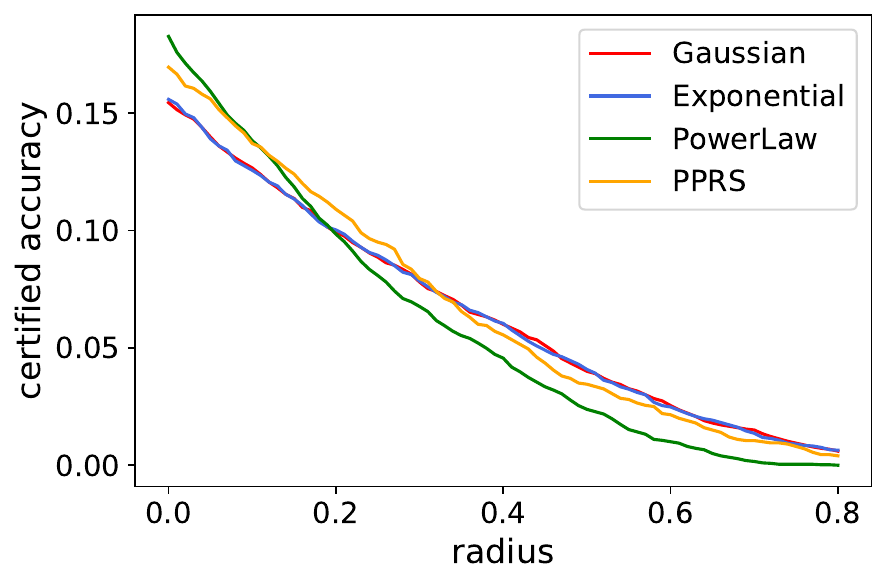}
  \caption{CIFAR-10 $\sigma=0.5$}
  \label{fig:AppendexDists50Cifar}
\end{subfigure}\hfil 
\caption{Comparision of our method with Yang et~al. framework for randomized smoothing when using a $L_2$ adversary on MNIST and CIFAR-10}
\label{AppendixAllDistMnist}
\end{figure*}

\begin{figure*}[h]
    \centering
    \begin{subfigure}{\textwidth}
    \includegraphics[trim={5cm 19cm 5cm 21cm},clip, scale=0.4]{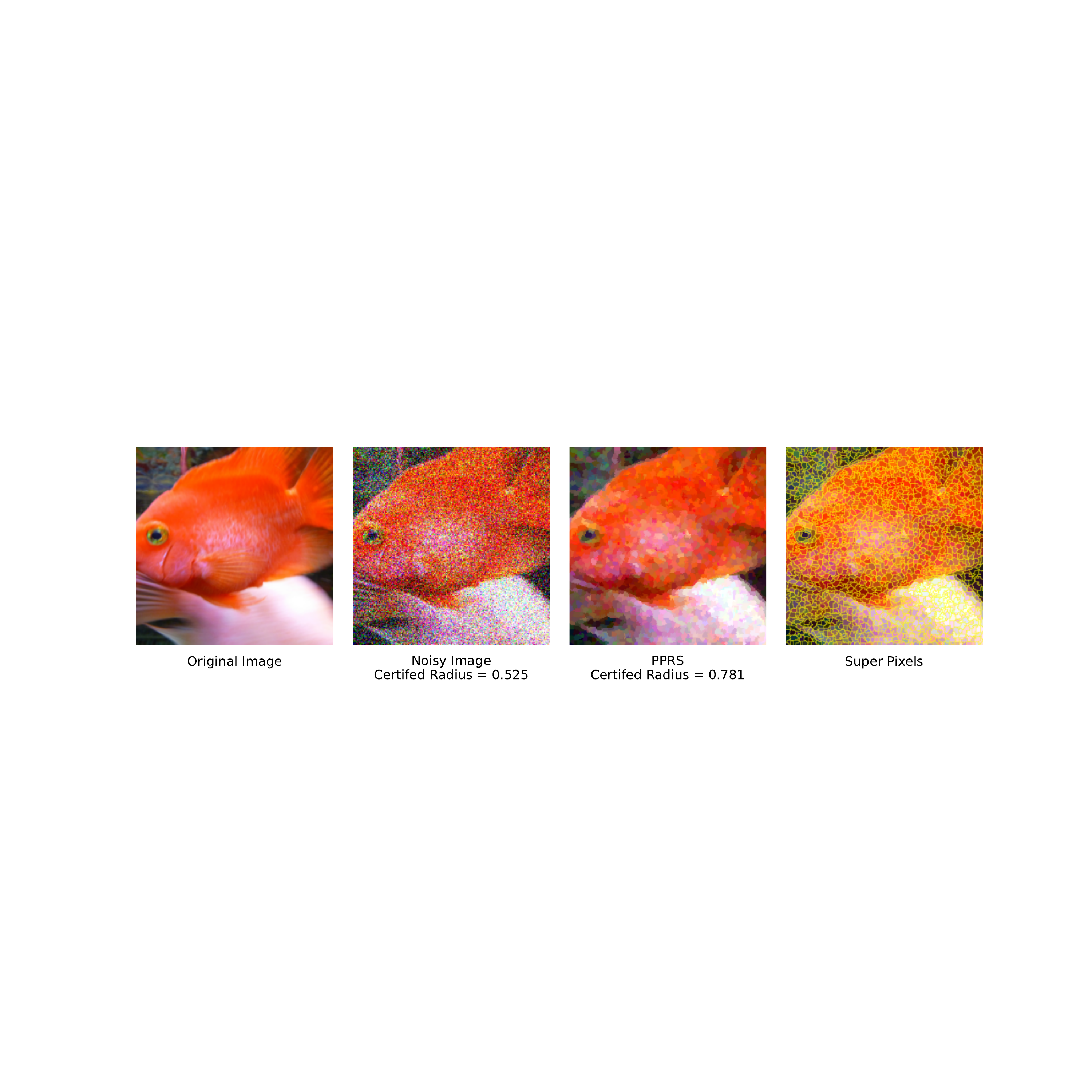}
    \label{fig:Appendix SampleImgs subfig1}
    \end{subfigure} 
    \begin{subfigure}{\textwidth}
    \includegraphics[trim={5cm 19cm 5cm  21cm},clip, scale=0.4]{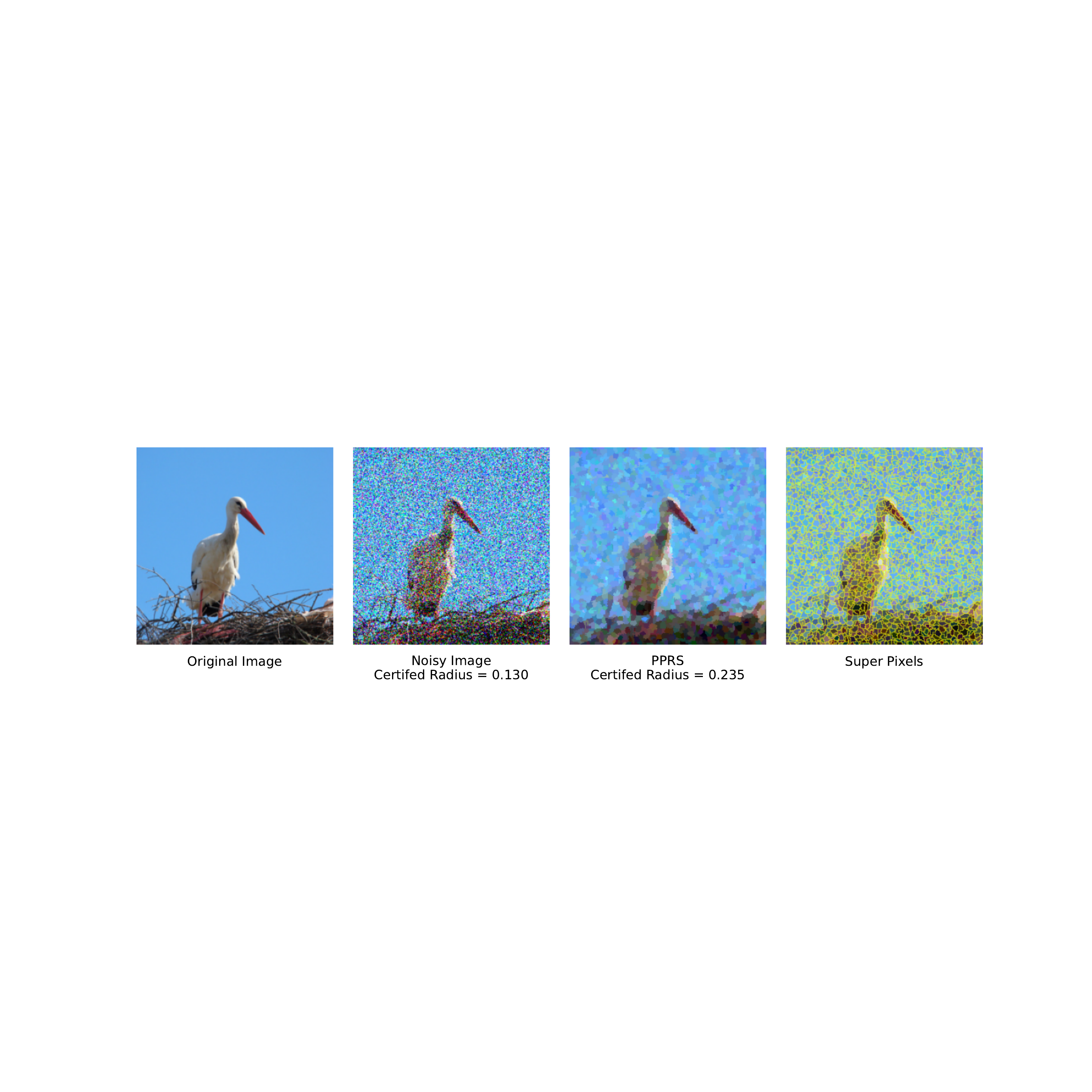}
    \label{fig:Appendix SampleImgs subfig2}
    \end{subfigure}\hfil 
    \begin{subfigure}{\textwidth}
    \includegraphics[trim={5cm 19cm 5cm 21cm},clip, scale=0.4]{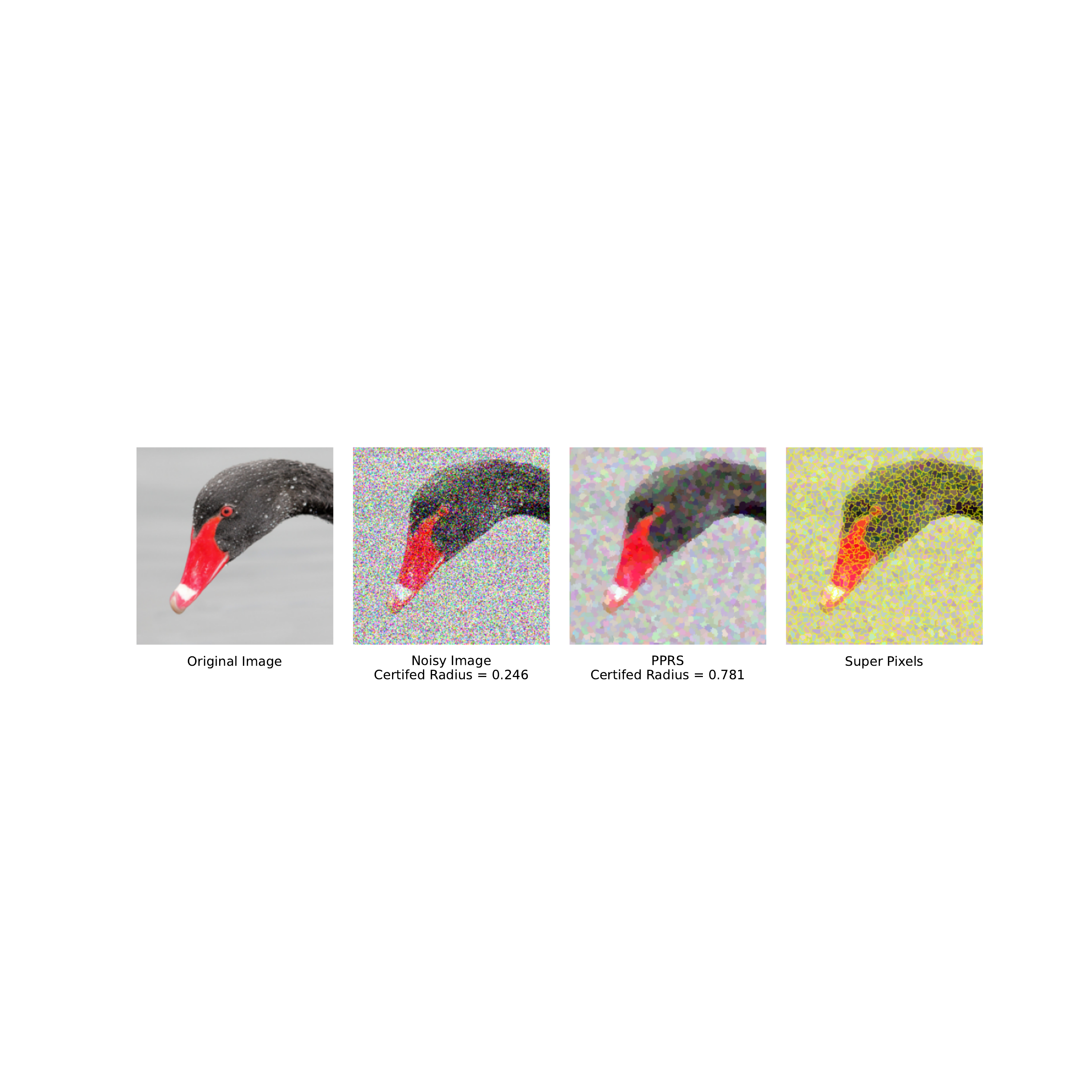}
    \label{fig:Appendix SampleImgs subfig3}
    \end{subfigure}\hfil 
    \begin{subfigure}{\textwidth}
    \includegraphics[trim={5cm 19cm 5cm 21cm},clip, scale=0.4]{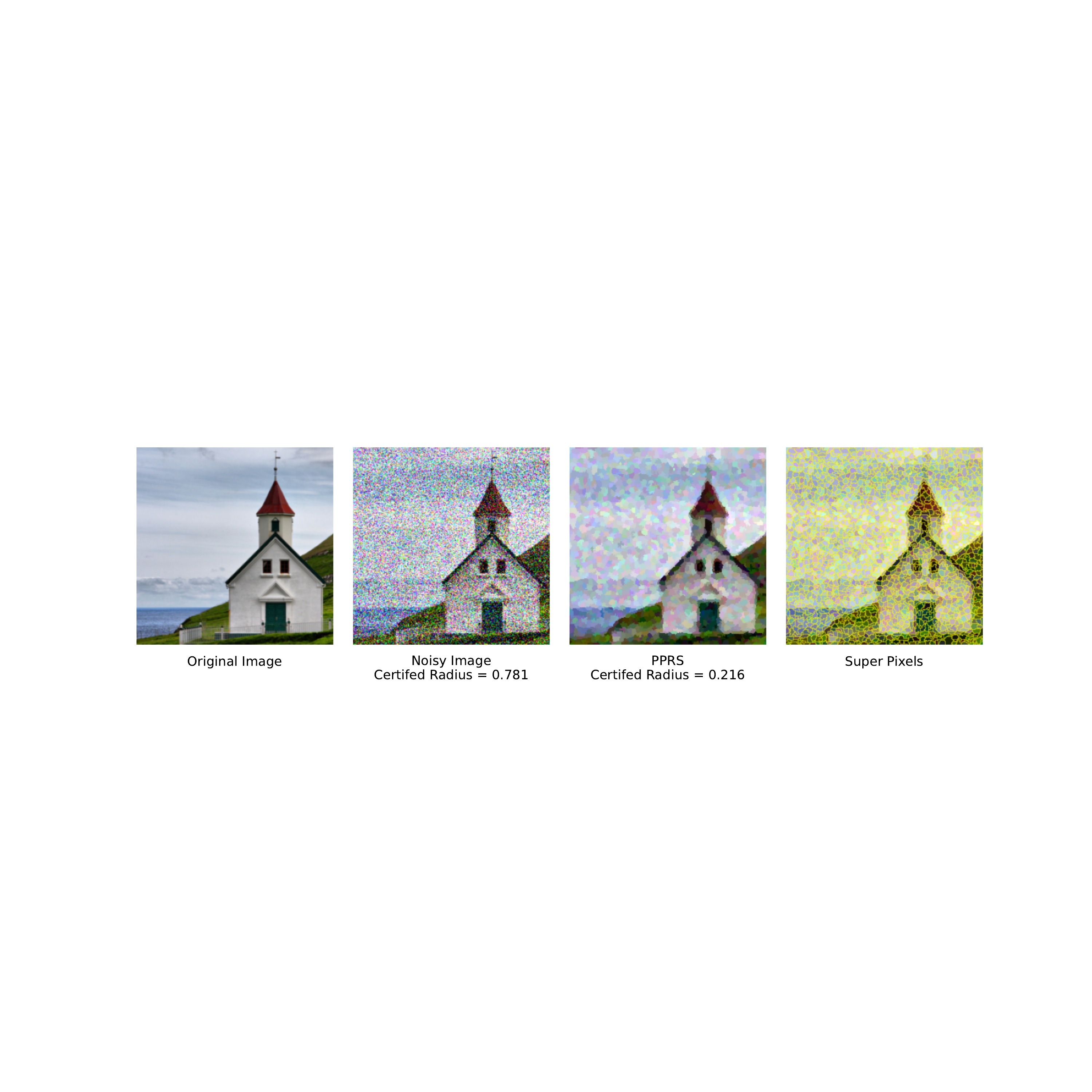}
    \label{fig:Appendix SampleImgs subfig4}
    \end{subfigure}\hfil 
    \begin{subfigure}{\textwidth}
    \includegraphics[trim={5cm 19cm 5cm 21cm},clip, scale=0.4]{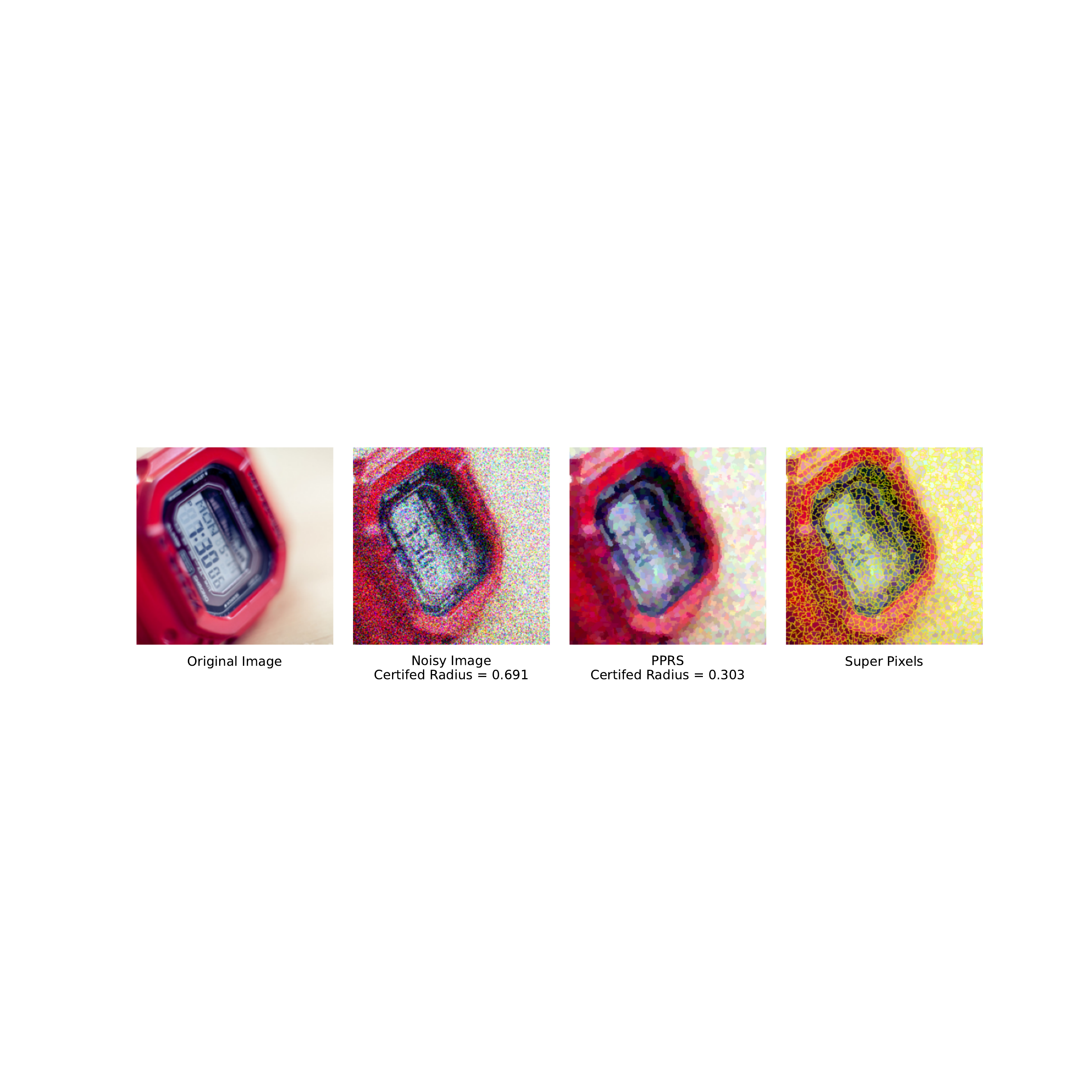}
    \label{fig:Appendix SampleImgs subfig4]5}
    \end{subfigure}\hfil 
    \caption{  Additional demonstration images found after adding Gaussian Noise with $\sigma$ = 0.3 and
applying SuperPixel with 2000 components.}
    \label{fig: Appendix SampleImgs}
\end{figure*}


\end{document}